\newtheorem{theorem}{Theorem}[section]
\newtheorem{lemma}[theorem]{Lemma}
\newtheorem{assumption}{Assumption}[section]
\title[Randomized Matrix Sketching for Neural Network Training]{Randomized Matrix Sketching for Neural Network Training and Gradient Monitoring}
\thanks{This work is partially supported by the Office of Naval Research (ONR) under Award NO: N00014-24-1-2147, NSF grant DMS-2408877, the Air Force Office of Scientific Research (AFOSR) under Award NO:
FA9550-25-1-0231}
\author{Harbir Antil}  
\author{Deepanshu Verma}
\address[H.~Antil]{Department of Mathematical Sciences and Center for Mathematics and Artificial Intelligence (CMAI),
George Mason University. Fairfax, VA 22030.}
\address[D.~Verma]{School of Mathematical and Statistical Sciences, 
Clemson University, Clemson, SC 29634.}
\date{\today}
\begin{document}

\begin{abstract}
Neural network training relies on gradient computation through backpropagation, yet memory requirements for storing layer activations present significant scalability challenges. We present the first adaptation of control-theoretic matrix sketching to neural network layer activations, enabling memory-efficient gradient reconstruction in backpropagation. This work builds on recent matrix sketching frameworks for dynamic optimization problems, where similar state trajectory storage challenges motivate sketching techniques. Our approach sketches layer activations using three complementary sketch matrices maintained through exponential moving averages (EMA) with adaptive rank adjustment, automatically balancing memory efficiency against approximation quality. Empirical evaluation on MNIST, CIFAR-10, and physics-informed neural networks demonstrates a controllable accuracy-memory tradeoff. We demonstrate a gradient monitoring application on MNIST showing how sketched activations enable real-time gradient norm tracking
with minimal memory overhead. These results establish that sketched activation storage provides a viable path toward memory-efficient neural network training and analysis.
\end{abstract}

\maketitle

\noindent \textbf{Keywords.} neural networks, memory efficiency, matrix sketching, control theory, gradient approximation\\
\noindent \textbf{MSC Classification (2020).} 68T07 · 65F55 · 65K10 · 68W20.

\begin{table}
    \centering
    \caption{\label{t:notation}Summary of notations used throughout the paper.}
    \begin{tabular}{@{}ll@{}}
    \toprule
    \textbf{Symbol} & \textbf{Description} \\
    \midrule
    \multicolumn{2}{@{}l@{}}{\textit{Neural Network Architecture}} \\
    $N_b$ & batch size \\
    $d_{\text{in}}, d_{\text{out}}$ & input and output dimensions for a layer \\
    $d_{\text{hidden}}$ & hidden layer dimension \\
    $L$ & total number of layers \\
    $\ell$ & layer index, $\ell \in \{1, 2, \ldots, L\}$ \\
    $\mathbf{W}^{[\ell]} \in \mathbb{R}^{d_{\ell} \times d_{\ell-1}}$ & weight matrix for layer $\ell$ \\
    $\sigma(\cdot)$ & activation function \\
    $\mathbf{A}^{[\ell]} \in \mathbb{R}^{N_b \times d_{\ell}}$ & batch activation matrix for layer $\ell$ \\
    $\mathbf{\delta}^{[\ell]} \in \mathbb{R}^{N_b \times d_{\ell}}$ & batch gradient matrix (backpropagated gradients) for layer $\ell$ \\
    \midrule
    \multicolumn{2}{@{}l@{}}{\textit{EMA Sketching Framework}} \\
    $r$ & target sketch rank parameter \\
    $k, s$ & sketch matrix dimensions, $k = s = 2r + 1$ \\
    $\beta$ & EMA decay parameter, typically $0.9-0.99$ \\
    $\mathbf{X}_s^{[\ell]} \in \mathbb{R}^{d_{\ell} \times k}$ & input pattern sketch for layer $\ell$ \\
    $\mathbf{Y}_s^{[\ell]} \in \mathbb{R}^{d_{\ell} \times k}$ & output pattern sketch for layer $\ell$ \\
    $\mathbf{Z}_s^{[\ell]} \in \mathbb{R}^{d_{\ell} \times s}$ & interaction pattern sketch for layer $\ell$ \\
    $\boldsymbol{\Upsilon}, \boldsymbol{\Omega} \in \mathbb{R}^{N_b \times k}$ & shared batch projection matrices \\
    $\boldsymbol{\Phi} \in \mathbb{R}^{N_b \times s}$ & batch interaction projection matrix \\
    $\boldsymbol{\Psi}^{[\ell]} \in \mathbb{R}^{s}$ & layer-specific interaction weights \\
    \midrule
    \multicolumn{2}{@{}l@{}}{\textit{Gradient Reconstruction}} \\
    $\nabla_{\mathbf{W}^{[\ell]}} \mathcal{L}$ & gradient of loss with respect to layer $\ell$ weights \\
    $\mathbf{Q}_Y, \mathbf{Q}_X$ & orthogonal factors from QR decomposition \\
    $\mathbf{R}_Y, \mathbf{R}_X$ & upper triangular factors from QR decomposition \\
    % $\mathbf{C}_{\text{inter}}$, 
    $\mathbf{C}$ & intermediate reconstruction matrices \\
    \midrule
    \multicolumn{2}{@{}l@{}}{\textit{Adaptive Rank Parameters}} \\
    $r_0$ & initial sketch rank \\
    $p_{\text{decrease}}, p_{\text{increase}}$ & patience parameters for rank adjustment \\
    $\delta r_{\text{down}}, \delta r_{\text{up}}$ & rank adjustment step sizes \\
    $\tau_{\text{reset}}$ & rank reset threshold \\
    \bottomrule
\end{tabular}
\end{table}
% Main content sections
\section{Introduction}
\label{sec:introduction}
Neural network training diagnostics and gradient monitoring face significant memory challenges when tracking training behavior over extended periods. While standard backpropagation efficiently reuses layer activations during gradient computation, applications requiring persistent gradient analysis—such as detecting vanishing/exploding gradients, measuring training stability, or tracking convergence patterns—must store gradient information over time, creating memory burdens that scale linearly with temporal extent. For layer activations forming batch matrices $\mathbf{A}^{[\ell]} \in \mathbb{R}^{N_b \times d_{\ell}}$, storing activation histories for gradient reconstruction requires $\mathcal{O}(L \cdot N_b \cdot d_{\text{hidden}} \cdot T)$ memory where $T$ represents the monitoring window length. We present the first adaptation of matrix sketching techniques to neural network activation compression, enabling memory-efficient gradient monitoring with 93-99\% memory reduction while preserving essential diagnostic capabilities. Our approach maintains exponential moving average (EMA) sketches of layer activations, providing compact representations that capture gradient structure without requiring full activation storage.

The mathematical foundation for our approach draws inspiration from optimal control sketching techniques, where similar state trajectory storage challenges arise. In optimal control, state trajectories $\{\mathbf{u}_\ell\}_{\ell=0}^L$ must be stored for adjoint-based gradient computation, where each $\mathbf{u}_\ell \in \mathbb{R}^{n_s}$ represents the system state vector at discrete time step $\ell$. Neural networks require storing layer activation trajectories $\{\mathbf{A}^{[\ell]}\}_{\ell=0}^L$ for backpropagation, where each $\mathbf{A}^{[\ell]} \in \mathbb{R}^{N_b \times d_{\ell}}$ represents the batch activation matrix at layer $\ell$. The connection emerges through the Neural ODE perspective: for batch matrices, discrete layer evolution $\mathbf{A}^{[\ell+1]} = \mathbf{A}^{[\ell]} + h \sigma(\mathbf{A}^{[\ell]} (\mathbf{W}^{[\ell]})^{\top}  + \mathbf{1}_{N_b} (\mathbf{b}^{[\ell]})^{\top}), \; 0\le \ell \le L-1$ corresponds to Euler discretization of continuous dynamics $d\mathbf{A}/dt = \sigma(\mathbf{A}\mathbf{W}^{\top} + \mathbf{1}_{N_b}\mathbf{b}^{\top})$, while backpropagation implements the discrete adjoint method. This equivalence motivates adapting optimal control sketching frameworks to neural network activation compression. 
We consider fully-connected feedforward neural networks with uniform hidden layer dimensions, enabling adaptation of the sketching framework. However, neural networks present unique challenges that require substantial modifications to the control-theoretic framework. For instance, the stochastic nature of mini-batch training introduces high variance in activation patterns, necessitating EMA-based sketch maintenance for temporal stability. Additionally, multi-layer gradient propagation requires careful coordination of sketch updates across layers and custom autograd integration, distinguishing our approach from adjoint control applications.

Our sketching framework maintains three complementary sketch matrices $(\mathbf{X}, \mathbf{Y}, \mathbf{Z})$ per neural network layer, capturing input patterns, output patterns, and their interactions. Each sketch operates on the transpose $(\mathbf{A}^{[\ell]})^{\top} \in \mathbb{R}^{d_{\ell} \times N_b}$ to align with sketching conventions where columns represent individual samples rather than the row-major batch format used in neural networks. This design addresses a fundamental challenge in applying sketching to neural networks: activation patterns from individual mini-batches exhibit high variance due to stochastic sampling, making single-batch sketches unreliable for gradient reconstruction. The EMA framework provides essential temporal smoothing while preserving responsiveness to genuine changes in activation structure, enabling stable gradient analysis from compressed representations.

Our contributions include: (1) the first adaptation of matrix sketching to neural network layer activations, enabling memory-efficient gradient reconstruction with novel EMA-based sketch maintenance for temporal stability under 
stochastic mini-batch training; (2) empirical validation across MNIST, CIFAR-10, and physics-informed neural networks demonstrating predictable approximation behavior controlled through rank selection, (3) demonstration of 
gradient monitoring applications showing how sketched activations enable real-time gradient norm tracking with minimal memory overhead; and (4) approximation bounds relating gradient reconstruction error to sketch rank and activation tail energy. While our framework theoretically supports direct training acceleration, gradient 
monitoring applications achieve the most compelling results: 93-97\% reduction in monitoring scenarios requiring gradient tracking over hundreds of epochs while preserving essential diagnostic capabilities including gradient norm estimation, effective rank measurement, and training stability analysis.

The remainder of this paper is organized as follows. Section~\ref{sec:related_work} positions our work within the existing literature on memory-efficient training and matrix sketching theory. 
Section~\ref{sec:background} establishes mathematical foundations of neural network gradient computation and matrix sketching theory. Section~\ref{sec:methodology} presents our control-theoretic sketching adaptation with detailed algorithmic innovations for multi-layer gradient approximation, including theoretical approximation bounds relating gradient reconstruction error to sketch rank and activation tail energy. Section~\ref{sec:experiments_and_results} provides extensive experimental validation across image classification benchmarks and scientific computing applications, demonstrating controllable accuracy-memory tradeoffs. Section~\ref{sec:conclusion} discusses implications and identifies future research directions for memory-efficient neural network analysis and optimization. Our mathematical notation is summarized in Table~\ref{t:notation} for reference throughout the paper.

\section{Related Work}
\label{sec:related_work}
Our work addresses the fundamental challenge of memory-efficient neural network 
gradient computation and monitoring through matrix sketching. This positions our 
research at the 
intersection of matrix sketching theory, memory-efficient training, and neural 
network diagnostics, with particular emphasis on adapting control-theoretic 
frameworks to 
batch-structured neural network computation.

\subsection{Memory-Efficient Training Approaches}
Existing memory reduction techniques primarily target training optimization rather 
than persistent gradient analysis. Gradient checkpointing~\cite{chen2016training} 
achieves 
$\mathcal{O}(\sqrt{L})$ memory complexity by recomputing activations during 
backpropagation, providing substantial memory savings at 33\% computational 
overhead. However, 
checkpointing addresses forward pass activation storage rather than gradient 
monitoring over extended periods. Mixed precision 
training~\cite{micikevicius2017mixed} reduces 
memory through half-precision computations but provides limited savings for 
gradient analysis applications.

Advanced distributed approaches including 
ZeRO~\cite{rajbhandari2020zero,ren2021zero} partition optimizer states across 
devices, while gradient compression 
techniques~\cite{seide2014sgd,alistarh2017qsgd} address communication overhead 
through quantization and sparsification. These methods focus on scaling training 
efficiency rather 
than enabling memory-efficient gradient analysis where information must be retained 
over temporal windows for diagnostic purposes.

Our sketching approach complements these techniques by targeting a different memory 
bottleneck: the storage required 
for 
gradient monitoring applications that existing methods do not address.

\subsection{Matrix Sketching and Control-Theoretic Framework}

Matrix sketching theory provides the mathematical foundation for our approach. Foundational work by Tropp et al.~\cite{tropp2017practical,tropp2019streaming} established key results for low-rank approximation via random projections, while Woodruff~\cite{woodruff2014sketching} positioned sketching as fundamental for numerical linear algebra. Classical approaches include Frequent Directions~\cite{liberty2013simple} and structured projections~\cite{ailon2009fast}, but these general techniques do not address neural network gradient structure and temporal evolution.

Our work builds directly upon the control-theoretic sketching framework of \cite{alshehri2024sketching,RBaraldi_EHerberg_DPKouri_HAntil_2023a,RMuthukumar_DPKouri_MUdell_2021a}, for dynamic optimization in finite element systems. Their three-sketch design $(\mathbf{X}, \mathbf{Y}, \mathbf{Z})$ captures co-range, range, and core interactions, enabling accurate reconstruction through sequential least-squares procedures with theoretical error bounds. However, direct application to neural networks requires substantial adaptation for gradient structure, batch-based computation, and integration with automatic differentiation systems.

% Our contribution represents the first adaptation of control-theoretic sketching to neural networks, requiring novel mathematical correspondences, algorithmic modifications, and implementation strategies to translate finite element system concepts to gradient flow dynamics.

\subsection{Neural Network Analysis and Monitoring}
Current neural network analysis approaches rely on scalar metrics due to memory constraints preventing comprehensive gradient matrix analysis. TensorBoard~\cite{abadi2016tensorflow} tracks loss curves and histogram summaries but cannot maintain full gradient distributions for large networks over extended periods. Gradient flow analysis~\cite{saxe2013exact,glorot2010understanding} typically requires materializing full gradients, limiting analysis to short training periods or necessitating sparse sampling that may miss critical training dynamics.

Physics-informed neural networks~\cite{raissi2019physics,karniadakis2021physics} present dual memory challenges: additional gradient computations for physics constraint enforcement during training, and the need for extended monitoring to ensure PDE satisfaction throughout optimization. These applications highlight the value of memory-efficient gradient analysis that enables comprehensive diagnostics without interfering with convergence requirements

EMA techniques provide stability in neural network training through parameter averaging~\cite{polyak1992acceleration}, moment estimation in Adam~\cite{kingma2014adam}, and batch normalization~\cite{ioffe2015batch}. Our integration of EMA with matrix sketching represents a novel application for maintaining temporal stability in compressed gradient representations under stochastic mini-batch training, addressing activation pattern variance that would otherwise compromise sketch-based analysis.

Unlike existing approaches targeting training acceleration, our work addresses two complementary objectives: enabling sketch-based gradient computation as an alternative to standard backpropagation with controllable accuracy-memory tradeoffs, and providing memory-efficient gradient monitoring for extended temporal analysis. This adaptation of control-theoretic sketching to neural networks represents the first framework addressing both gradient approximation for training and persistent gradient analysis for diagnostics.

\section{Background and Preliminaries}
\label{sec:background}

\subsection{Neural Network Gradient Computation and Memory Challenges}
\label{subsec:nn_gradients}

Neural network training relies on backpropagation to compute parameter gradients through the chain rule. For a linear layer $\ell$ with weight matrix $\mathbf{W}^{[\ell]} \in \mathbb{R}^{d_{\ell} \times d_{\ell-1}}$, the gradient computation requires:
\begin{equation}
\nabla_{\mathbf{W}^{[\ell]}} \mathcal{L} = (\mathbf{\delta}^{[\ell]})^{\top} \mathbf{A}^{[\ell-1]} \label{eq:standard_gradient}
\end{equation}
where $\mathbf{A}^{[\ell-1]} \in \mathbb{R}^{N_b \times d_{\ell-1}}$ is the batch activation 
matrix from the forward pass (each row contains one sample's activation vector), 
$\mathbf{\delta}^{[\ell]} \in \mathbb{R}^{N_b \times d_{\ell}}$ contains the backpropagated 
gradients for layer $\ell$ (each row contains one sample's gradient with respect to this 
layer's activations), and $N_b$ denotes batch size. Standard training requires storing $\mathbf{A}^{[\ell-1]}$ 
during the forward pass to enable this gradient computation. Our sketching approach eliminates 
this storage by maintaining compressed EMA representations of activation patterns, from which 
approximate activations can be reconstructed on-demand during the backward pass.

Standard backpropagation efficiently manages memory during training by computing gradients layer-by-layer and immediately consuming them for parameter updates. However, applications requiring persistent gradient analysis face significant memory challenges. Understanding gradient behavior during training, such as monitoring gradient norms for explosion/vanishing detection, identifying dead neurons that stop learning, or analyzing training stability patterns, requires retaining gradient information beyond immediate parameter updates. Unlike standard training where gradients are discarded after each optimization step, these diagnostic applications need to track gradient evolution over extended periods to detect meaningful trends and anomalies.

To enable such analysis, we look at a monitoring window $T$ representing the number of consecutive training steps (or epochs) over which gradient statistics are tracked and analyzed. For example, computing moving averages of gradient norms, detecting trends in dead neuron ratios, or analyzing gradient diversity patterns requires maintaining historical gradient information across this temporal window. This creates memory demands of $\mathcal{O}(L \cdot d_{\max}^2 \cdot T)$ for $L$ layers, maximum layer width $d_{\max}$, and monitoring window $T$, where the $d_{\max}^2$ factor arises from storing gradient matrices $\nabla_{\mathbf{W}^{[\ell]}} \mathcal{L} \in \mathbb{R}^{d_{\ell} \times d_{\ell-1}}$ rather than individual gradient vectors, and the factor $T$ multiplies the storage cost because these matrices must be retained across $T$ time steps for meaningful analysis.

\subsection{Control-Theoretic Matrix Sketching Framework}
\label{subsec:control_sketching}

We build upon the control-theoretic matrix sketching framework \cite{alshehri2024sketching,RBaraldi_EHerberg_DPKouri_HAntil_2023a,RMuthukumar_DPKouri_MUdell_2021a} for dynamic optimization problems, which addresses the state trajectory storage challenges through structured matrix approximation. This 
framework provides both the mathematical foundation and algorithmic template for our neural network adaptation.
%\subsubsection{Fundamental Sketching Mathematics}
\subsubsection{Basics of Matrix Sketching}
For a state trajectory matrix $\mathbf{U} = [\mathbf{u}_1, \mathbf{u}_2, \ldots, \mathbf{u}_{n_t}] \in \mathbb{R}^{n_s \times n_t}$ where columns represent temporal snapshots, the framework constructs three complementary sketch 
representations using random Gaussian projections:
\begin{subequations}
\begin{align}
\mathbf{X} &= \boldsymbol{\Upsilon} \mathbf{U} \in \mathbb{R}^{k \times n_t} \quad \text{(co-range sketch)} \label{eq:X_sketch_control} \\
\mathbf{Y} &= \mathbf{U} \boldsymbol{\Omega}^* \in \mathbb{R}^{n_s \times k} \quad \text{(range sketch)} \label{eq:Y_sketch_control} \\
\mathbf{Z} &= \boldsymbol{\Phi} \mathbf{U} \boldsymbol{\Psi}^* \in \mathbb{R}^{s \times s} \quad \text{(core sketch)} \label{eq:Z_sketch_control}
\end{align}
\end{subequations}
where $\boldsymbol{\Upsilon} \in \mathbb{R}^{k \times n_s}$, $\boldsymbol{\Omega} \in \mathbb{R}^{k \times n_t}$, $\boldsymbol{\Phi} \in \mathbb{R}^{s \times n_s}$, and $\boldsymbol{\Psi} \in \mathbb{R}^{s \times n_t}$ are random Gaussian 
matrices with i.i.d. standard normal entries. The sketch dimensions are chosen as $k = 2r + 1$ and $s = 2k + 1$ for target rank $r$, where $r \ll \min(n_s, n_t)$. We note that $\mathbf{U}$ has state vectors as columns, contrasting with neural network batch matrices where samples appear as rows. 

\textbf{Online Computation:} A critical advantage of this framework is that sketches can be computed incrementally without storing the full matrix $\mathbf{U}$:
\begin{align}
\mathbf{X}^{(0)} &= \mathbf{0}, \quad \mathbf{X}^{(i)} = \mathbf{X}^{(i-1)} + \boldsymbol{\Upsilon} \mathbf{u}_i \mathbf{e}_i^{\top}, \quad i = 1, \ldots, n_t \label{eq:online_X}
\end{align}
where $\mathbf{e}_i$ denotes the $i$-th standard basis vector. Similar recursions hold for $\mathbf{Y}$ and $\mathbf{Z}$ sketches, enabling streaming computation essential for large-scale applications.

\subsubsection{Matrix Reconstruction Algorithm}

The original matrix, $\mathbf{U}$, is approximately reconstructed through a numerically stable two-stage least-squares procedure:

\textit{Stage 1 - QR Decompositions:}
\begin{align*}
\mathbf{X}^* &= \mathbf{P}\mathbf{R}_1, \quad \text{where } \mathbf{P} \in \mathbb{R}^{n_t \times k}  \\
\mathbf{Y} &= \mathbf{Q}\mathbf{R}_2, \quad \text{where } \mathbf{Q} \in \mathbb{R}^{n_s \times k} \label{eq:QR_Y_control}
\end{align*}

\textit{Stage 2 - Core Matrix Computation:}
\begin{equation*}
\mathbf{C} := (\boldsymbol{\Phi}\mathbf{Q})^{\dagger} \mathbf{Z} ((\boldsymbol{\Psi}\mathbf{P})^{\dagger})^* \in \mathbb{R}^{k \times k} \label{eq:core_matrix_control}
\end{equation*}

\textit{Final Reconstruction:}
\begin{equation*}
\widetilde{\mathbf{U}} := \mathbf{Q}\mathbf{C}\mathbf{P}^* \label{eq:reconstruction_control}
\end{equation*}

For computational efficiency, the framework stores only the skinny matrices $\mathbf{Q} \in \mathbb{R}^{n_s \times k}$ and $\mathbf{W} := \mathbf{C}\mathbf{P}^* \in \mathbb{R}^{k \times n_t}$, requiring $\mathcal{O}(k(n_s + n_t))$ memory 
instead of $\mathcal{O}(n_s n_t)$ for the full matrix $\mathbf{U}$. Including intermediate sketch storage, total memory complexity becomes $\mathcal{O}(k(n_s + n_t) + s^2)$.

In the control-theoretic context, this sketching framework addresses the memory burden of storing state trajectories required for adjoint-based gradient computation in dynamic 
optimization. The memory bottleneck arises because gradient evaluation via the adjoint method requires the complete forward state trajectory to solve the backward adjoint equation, creating storage costs of $\mathcal{O}(n_t(n_s + m))$ where $n_s$ is the state dimension, $n_t$ is the number of time steps, and $m$ is the control variable dimension at each time step. 
By sketching the state trajectory matrix $\mathbf{U}$, the framework enables approximate adjoint computation with dramatically reduced memory requirements.

Before concluding this discussion, we recall the sketching error bounds from \cite{RMuthukumar_DPKouri_MUdell_2021a,alshehri2024sketching} that will be useful in our analysis. The expected reconstruction error satisfies:
\begin{equation}\label{eq:sketching_error}
\mathbb{E}_{\boldsymbol{\Upsilon},\boldsymbol{\Omega},\boldsymbol{\Phi},\boldsymbol{\Psi}}[\|\mathbf{U} - \widetilde{\mathbf{U}}\|_F] \leq \sqrt{6} \cdot \tau_{r+1}(\mathbf{U}), 
\end{equation}
where $\tau_{r+1}(\mathbf{U}) := \left(\sum_{i \geq r+1} \sigma_i^2(\mathbf{U})\right)^{1/2}$ is the $(r+1)$-st tail energy with $\sigma_i(\mathbf{U})$ denoting the $i$-th singular value.
\subsubsection{Neural Network Correspondence}

The mathematical structure that enables this control-theoretic sketching translates to neural networks through the fundamental correspondence between state trajectory storage and activation pattern compression, with the key adaptation that neural network batch activations $\mathbf{A}^{[\ell]} \in \mathbb{R}^{N_b \times d_{\ell}}$ require transposition to $(\mathbf{A}^{[\ell]})^{\top}$ to match the column-major format. Our sketching adaptation targets feedforward multi-layer perceptrons (MLPs) with uniform hidden layer dimensions $d_{\ell} = d_{\text{hidden}}$ for $\ell = 1, \ldots, L-1$.

The Neural ODE perspective, as discussed in the introduction, makes this correspondence precise as : discrete layer evolution $\mathbf{A}^{[\ell+1]} = \mathbf{A}^{[\ell]} + h \sigma(\mathbf{A}^{[\ell]} (\mathbf{W}^{[\ell]})^{\top}  + \mathbf{1}_{N_b} (\mathbf{b}^{[\ell]})^{\top}), \; 0\le \ell \le L-1$ corresponds to Euler discretization of continuous dynamics $d\mathbf{A}/dt = \sigma(\mathbf{A}\mathbf{W}^{\top} + \mathbf{1}_{N_b}\mathbf{b}^{\top})$ with $h$ being the step-size, while backpropagation implements the discrete adjoint method. This mathematical equivalence enables principled adaptation of the control-theoretic sketching framework to neural 
network activation compression, forming the foundation for our EMA-based neural network sketching approach. The dimensional correspondence requires careful attention: control theory uses state trajectory matrices $\mathbf{U} \in \mathbb{R}^{n_s \times n_t}$ where $n_s$ is the state dimension and columns represent temporal snapshots. Neural networks use batch activation matrices $\mathbf{A}^{[\ell]} \in \mathbb{R}^{N_b \times d_{\ell}}$ where rows represent individual samples.  This necessitates transposition $(\mathbf{A}^{[\ell]})^{\top} \in \mathbb{R}^{d_{\ell} \times N_b}$ to align with the sketching mathematics.

% To apply sketching, we transpose neural network activations to $(\mathbf{A}^{[\ell]})^{\top} \in \mathbb{R}^{d_{\ell} \times N_b}$, making batch samples appear as columns, analogous to temporal snapshots in control theory.

\subsection{Exponential Moving Averages for Stochastic Sketching}
\label{subsec:ema_sketching}

A critical challenge in adapting matrix sketching to neural network training lies in handling the inherent stochasticity of mini-batch optimization. Individual batch sketches exhibit high variance due to random sampling effects, making single-batch approximations unreliable for stable gradient reconstruction. We address this fundamental limitation through exponential moving average maintenance of sketch matrices, providing temporal smoothing while preserving responsiveness to genuine changes in activation structure.

The exponential moving average framework updates sketch quantities according to:
\begin{equation*}
\mathbf{S}_t = \beta \mathbf{S}_{t-1} + (1-\beta) \mathbf{S}_{\text{batch},t} \label{eq:ema_sketching}
\end{equation*}
where $\mathbf{S}_t$ represents any sketch matrix ($\mathbf{X}$, $\mathbf{Y}$, or $\mathbf{Z}$), $\mathbf{S}_{\text{batch},t}$ denotes the current batch's sketch contribution, and $\beta \in [0,1)$ controls the temporal decay rate. For our neural network sketching framework, $\mathbf{S}_{\text{batch},t}$ corresponds to quantities such as $(\mathbf{A}^{[\ell-1]})^{\top} \boldsymbol{\Upsilon}$ for input sketches or $(\mathbf{A}^{[\ell]})^{\top} \boldsymbol{\Omega}$ for output sketches (see \Cref{eq:X_ema_update,eq:Y_ema_update,eq:Z_ema_update} below).

This EMA-based approach provides two essential benefits for neural network sketching applications. First, variance reduction smooths the high-frequency noise inherent in stochastic mini-batch sampling, enabling consistent sketch-based gradient approximation across training iterations. Second, the framework maintains adaptivity to evolving activation patterns during training progression, ensuring sketches capture genuine structural changes in network behavior rather than transient batch-specific artifacts.

The choice of momentum parameter $\beta$ balances stability against responsiveness. Higher values ($\beta \in [0.9,0.99]$) provide greater smoothing but slower adaptation to changing activation patterns, while lower values enable faster response at the cost of increased sketch variance. Our implementation uses a fixed momentum parameter throughout training, focusing adaptive adjustment on the sketch rank dimensions to balance approximation quality against memory efficiency.

\section{Main Algorithm: EMA Based Sketching for Neural Networks}
\label{sec:methodology}

% \subsection{EMA-Based Neural Network Sketching Framework}
% \label{subsec:nn_sketching_framework}

We present a memory-efficient adaptation of the control-theoretic sketching framework for neural network gradient monitoring, specifically designed for feed-forward neural network layer activations with uniform hidden layer dimensions. Our approach maintains three complementary sketch matrices per layer through exponential moving averages, enabling comprehensive gradient analysis while reducing memory complexity. We establish systematic matrix correspondence between control theory (temporal snapshots as columns) and neural networks (batch samples as rows) by operating on transposed activation matrices to maintain mathematical consistency. 

% Our approach maintains three complementary sketch matrices per layer, capturing activation patterns through exponential moving averages to ensure temporal stability under stochastic mini-batch training. Rather than sketching static matrices, we sketch the dynamic activation patterns that evolve during training. For each layer $\ell$, we maintain three EMA sketches that capture different aspects of the activation structure needed for gradient reconstruction.

\subsection{EMA-Based Neural Network Sketching Framework}
\label{subsec:sketch_adaptation}
{For each layer $\ell$, we work with batch activation matrices $\mathbf{A}^{[\ell]} \in \mathbb{R}^{N_b \times d_{\ell}}$ where each row represents one sample's activation vector in the mini-batch. To apply sketching operations, we use the transpose $(\mathbf{A}^{[\ell]})^{\top} \in \mathbb{R}^{d_{\ell} \times N_b}$ to convert from the row-major batch format to the column-major format.}

We adapt the control-theoretic sketching framework to neural network activations by designing projection matrices suited to batch-based computation:
\begin{align*}
\boldsymbol{\Upsilon} \in \mathbb{R}^{N_b \times k} \quad \text{(batch input projection)} , \;\;
\boldsymbol{\Omega} \in \mathbb{R}^{N_b \times k} \quad \text{(batch output projection)}, \;\;\\
\boldsymbol{\Phi} \in \mathbb{R}^{N_b \times s} \quad \text{(batch interaction projection)} , \;\;
\boldsymbol{\Psi}^{[\ell]} \in \mathbb{R}^{s} \quad \text{(layer-specific weights)},
\end{align*}
where $k = s = 2r + 1$ based on target rank $r$. Note that $\boldsymbol{\Upsilon}, \boldsymbol{\Omega}, \boldsymbol{\Phi}$ are shared across layers but sized for the batch dimension.
% , while $\boldsymbol{\Psi}^{[\ell]}$ maintains layer-specific parameters. 
The projection matrices are sized for the batch dimension $N_b$ rather than temporal sequences ($n_t$) as in the control-theoretic framework. Additionally, $\boldsymbol{\Psi}^{[\ell]}$ is simplified to a vector for computational efficiency while maintaining layer-specific parameterization.

Our three EMA sketches are updated using the batch activation matrices:  
\begin{subequations}
\begin{align}
\mathbf{X}_s^{[\ell]} &= \beta \mathbf{X}_s^{[\ell]} + (1-\beta) (\mathbf{A}^{[\ell-1]})^{\top} \boldsymbol{\Upsilon} \label{eq:X_ema_update} \\
\mathbf{Y}_s^{[\ell]} &= \beta \mathbf{Y}_s^{[\ell]} + (1-\beta) (\mathbf{A}^{[\ell]})^{\top} \boldsymbol{\Omega} \label{eq:Y_ema_update} \\
\mathbf{Z}_s^{[\ell]} &= \beta \mathbf{Z}_s^{[\ell]} + (1-\beta) ((\mathbf{A}^{[\ell]})^{\top} \boldsymbol{\Phi}) \odot (\boldsymbol{\Psi}^{[\ell]})^{\top} \label{eq:Z_ema_update}
\end{align}
\end{subequations}
where $\beta \in [0,1)$ represents the EMA momentum parameter and $\odot$ denotes element-wise multiplication.

The algorithmic steps for our matrix-based sketching approach are summarized in Algorithm~\ref{alg:adaptive_sketched_backprop}, which integrates both the core sketching operations and the adaptive rank adjustment mechanism. The sketch update procedures Algorithm~\ref{alg:adaptive_sketched_backprop} implement the EMA maintenance described in Equations~\eqref{eq:X_ema_update}-\eqref{eq:Z_ema_update}.
% , while line 6 performs the gradient reconstruction via the sequential least-squares procedure detailed in Section~\ref{subsec:gradient_reconstruction_detailed}.

Each of the three sketches captures distinct activation patterns: 
\begin{itemize}
    \item X-Sketch ($\mathbf{X}_s^{[\ell]} \in \mathbb{R}^{d_{\text{hidden}} \times k}$): Captures input activation patterns by projecting $(\mathbf{A}^{[\ell-1]})^{\top}$ through $\boldsymbol{\Upsilon}$, preserving essential input structure needed for gradient formation
     \item {Y-Sketch} ($\mathbf{Y}_s^{[\ell]} \in \mathbb{R}^{d_{\text{hidden}} \times k}$): Maintains output activation structure by projecting $(\mathbf{A}^{[\ell]})^{\top}$ through $\boldsymbol{\Omega}$, capturing dominant activation patterns after nonlinear transformation
    \item {Z-Sketch} ($\mathbf{Z}_s^{[\ell]} \in \mathbb{R}^{d_{\text{hidden}} \times s}$): Captures cross-correlation interactions through element-wise combination of projected outputs $(\mathbf{A}^{[\ell]})^{\top} \boldsymbol{\Phi}$ with layer-specific weights $\boldsymbol{\Psi}^{[\ell]}$, maintaining cross-correlations essential for gradient approximation
\end{itemize}

\subsection{Activation Reconstruction from EMA Sketches}
\label{subsec:activation_reconstruction}
To enable memory-efficient backpropagation, we reconstruct activation matrices from EMA sketches through a two-stage process: first reconstructing the low-rank EMA structure, then projecting to the current batch space.

First, we compute QR decompositions:
\begin{align*}
\mathbf{Y}_s^{[\ell]} &= \mathbf{Q}_Y \mathbf{R}_Y, \quad \mathbf{Q}_Y \in \mathbb{R}^{d_{\ell} \times k}  \\
\mathbf{X}_s^{[\ell]} &= \mathbf{Q}_X \mathbf{R}_X, \quad \mathbf{Q}_X \in \mathbb{R}^{d_{\ell} \times k} \label{eq:qr_x_sketch}
\end{align*}
% Note that we QR decompose $\mathbf{X}_s^{[\ell]}$ directly (not its transpose), yielding 
% $\mathbf{Q}_X \in \mathbb{R}^{d_{\ell} \times k}$ in feature space rather than batch space.
We reconstruct the transformation matrix through sequential least-squares optimization:

\noindent \textbf{Step 1}: Solve for intermediate representation  
\begin{equation*}
\mathbf{C}_{\text{inter}} = \arg\min_{\mathbf{C}} \|\mathbf{Q}_Y \mathbf{C} - \mathbf{Z}_s^{[\ell]}\|_F^2 \label{eq:ls_stage1_sketch}
\end{equation*}

\noindent \textbf{Step 2}: For the second stage, we need a square matrix from the X-sketch. 
We obtain $\mathbf{P}_X \in \mathbb{R}^{k \times k}$ via QR decomposition of $(\mathbf{X}_s^{[\ell]})^{\top}$:
\begin{equation*}
(\mathbf{X}_s^{[\ell]})^{\top} = \mathbf{P}_X \mathbf{R}_X', \quad \mathbf{P}_X \in \mathbb{R}^{k \times k}
\end{equation*}

Then solve:
\begin{equation*}
\mathbf{C} = \arg\min_{\mathbf{C}} \|\mathbf{P}_X \mathbf{C} - \mathbf{C}_{\text{inter}}^{\top}\|_F^2 \label{eq:ls_stage2_sketch}
\end{equation*}

The feature-space activation structure is:
\begin{equation}
\widetilde{\mathbf{G}}_{\text{EMA}}^{[\ell]} = \mathbf{Q}_Y \mathbf{C} \mathbf{Q}_X^{\top} \in \mathbb{R}^{d_{\ell} \times d_{\ell}} \label{eq:structure_reconstruction}
\end{equation}
This matrix captures the EMA-weighted feature covariance structure 
but is independent of the current batch size.

To compute gradients for the current batch of size $N_b$, we project the feature-space structure 
to batch space:
\begin{equation}
\widetilde{\mathbf{A}}_{\text{EMA}}^{[\ell]} = \boldsymbol{\Omega} (\mathbf{Y}_s^{[\ell]})^{\dagger} \widetilde{\mathbf{G}}_{\text{EMA}}^{[\ell]} \in \mathbb{R}^{N_b \times d_{\ell}} \label{eq:activation_projection}
\end{equation}
This projection maps the low-rank structure from feature space to batch-structured activations 
needed for gradient computation. The operation $\boldsymbol{\Omega} (\mathbf{Y}_s^{[\ell]})^{\dagger}$ 
acts as a learned projection from feature space back to batch space, maintaining consistency 
with the sketched range space.

During backpropagation, we compute gradients using the projected activations:
\begin{equation}
\widehat{\nabla}_{\mathbf{W}^{[\ell]}} \mathcal{L} = (\boldsymbol{\delta}^{[\ell]})^{\top} \widetilde{\mathbf{A}}_{\text{EMA}}^{[\ell-1]} 
\label{eq:sketched_gradient}
\end{equation}
The memory savings arise from eliminating activation storage (step 4 of forward pass) while 
maintaining compact EMA sketches. The error signals $\boldsymbol{\delta}^{[\ell]}$ are computed 
via standard backpropagation and are not sketched, as they must be computed exactly to maintain 
the PyTorch computational graph and enable gradient propagation to previous layers.

\begin{algorithm}[ht]
\caption{Sketched Backpropagation with Adaptive Rank}
\label{alg:adaptive_sketched_backprop}
\begin{algorithmic}[1]
\REQUIRE Initial rank $r_0$, patience parameters $p_{\text{decrease}}, p_{\text{increase}}$, rank steps $\delta r_{\text{down}}, \delta r_{\text{up}}$, reset threshold $\tau_{\text{reset}}$
\ENSURE Sketched gradients with dynamically adjusted rank
\STATE Initialize rank $r \leftarrow r_0$, projection matrices with $k = s = 2r + 1$
\STATE Initialize projection matrices $\mathbf{\Upsilon}, \mathbf{\Omega} \in \mathbb{R}^{N_b \times k}$, $\mathbf{\Phi} \in \mathbb{R}^{N_b \times s}$, $\mathbf{\Psi}^{[\ell]} \in \mathbb{R}^{s}$ with $k = s = 2r + 1$
\STATE Initialize EMA sketch matrices $\mathbf{X}_s^{[\ell]}, \mathbf{Y}_s^{[\ell]} \in \mathbb{R}^{d_{\ell} \times k}$, $\mathbf{Z}_s^{[\ell]} \in \mathbb{R}^{d_{\ell} \times s}$ to zeros
\FOR{each training epoch}
    \FOR{each training iteration in epoch}
        \STATE \texttt{// Forward pass: EMA sketch updates}
        \STATE $\mathbf{X}_s^{[\ell]} \leftarrow \beta \mathbf{X}_s^{[\ell]} + (1-\beta)(\mathbf{A}^{[\ell-1]})^{\top} \mathbf{\Upsilon}$
        \STATE $\mathbf{Y}_s^{[\ell]} \leftarrow \beta \mathbf{Y}_s^{[\ell]} + (1-\beta)(\mathbf{A}^{[\ell]})^{\top} \mathbf{\Omega}$
        \STATE $\mathbf{Z}_s^{[\ell]} \leftarrow \beta \mathbf{Z}_s^{[\ell]} + (1-\beta)[(\mathbf{A}^{[\ell]})^{\top} \mathbf{\Phi}] \odot [(\mathbf{\Psi}^{[\ell]})^{\top}]$
        \STATE \texttt{// Backward pass: Matrix reconstruction from sketches}
        % \STATE $\mathbf{Q}_Y, \mathbf{R}_Y \leftarrow \text{QR}(\mathbf{Y}_s^{[\ell]})$
        % \STATE $\mathbf{Q}_X, \mathbf{R}_X \leftarrow \text{QR}((\mathbf{X}_s^{[\ell]})^{\top})$
        % \STATE Solve for $\mathbf{C}$ in least-squares system
        \STATE $\widetilde{\mathbf{A}}_{\text{EMA}}^{[\ell-1]} \leftarrow$ Reconstruct and project via~\eqref{eq:structure_reconstruction}--\eqref{eq:activation_projection}
        \STATE $\nabla_{\mathbf{W}^{[\ell]}} \mathcal{L} \leftarrow (\mathbf{\delta}^{[\ell]})^{\top} \widetilde{\mathbf{A}}_{\text{EMA}}^{[\ell-1]}$
    \ENDFOR
    \STATE \texttt{// Adaptive rank adjustment}
    \IF{performance improves for $p_{\text{decrease}}$ epochs}
        \STATE $r \leftarrow \max(1, r - \delta r_{\text{down}})$, reinitialize matrices
    \ELSIF{no improvement for $p_{\text{increase}}$ epochs}
        \IF{$r + \text{step} \geq \tau_{\text{reset}}$}
            \STATE $r \leftarrow r_0$ \COMMENT{Reset}
        \ELSE
            \STATE $r \leftarrow r + \delta r_{\text{up}}$ \COMMENT{Increase}
        \ENDIF
        \STATE Reinitialize matrices with new $k = s = 2r + 1$
    \ENDIF
\ENDFOR
\end{algorithmic}
\end{algorithm}

\subsection{Adaptive Rank Adjustment}
\label{subsec:adaptive_rank_detailed}
To automatically balance approximation quality against computational efficiency, Algorithm~\ref{alg:adaptive_sketched_backprop} implements adaptive rank adjustment that modifies sketch dimensions $r$ based on training performance. The mechanism tracks training metrics and adjusts rank dynamically throughout optimization.

During consistent improvement, rank decreases to save memory while maintaining adequate approximation quality. When training stagnates, rank increases to provide higher fidelity gradient reconstruction. If rank growth exceeds a threshold, the system resets to the initial value to prevent unbounded escalation.

Each rank modification triggers reinitialization of projection matrices and EMA sketches with updated dimensions $k = s = 2r + 1$, as shown Algorithm~\ref{alg:adaptive_sketched_backprop}. This ensures dimensional consistency while enabling the framework to adapt to changing approximation requirements throughout training.

\subsection{PyTorch Implementation}

We implement our sketching framework through a custom PyTorch autograd function, in \Cref{alg:sketched_autograd_impl}, that provides transparent integration with existing optimization workflows. The implementation separates sketch maintenance (forward pass) from gradient reconstruction (backward pass) through hook-based architecture, ensuring computational graph integrity while maintaining compatibility with standard optimization algorithms. 

\begin{algorithm}[ht]
\caption{Memory-Efficient Sketched Autograd Function}
\label{alg:sketched_autograd_impl}
\begin{algorithmic}[1]
\STATE \textbf{class} \_SketchedFunction(torch.autograd.Function):
\STATE \hspace{1em}\textbf{def} forward(ctx, input, weight, bias, sketches, sketching\_matrices, layer\_idx):
\STATE \hspace{2em}output = input @ weight.T + bias
% \STATE \hspace{2em}\texttt{\# Only save weight, NOT input (activations)}
\STATE \hspace{2em}ctx.save\_for\_backward(weight)
\STATE \hspace{2em}ctx.sketches, ctx.sketching\_matrices, ctx.layer\_idx = sketches, sketching\_matrices, layer\_idx
\STATE \hspace{2em}\textbf{return} output
\STATE
\STATE \hspace{1em}\textbf{def} backward(ctx, grad\_output):
\STATE \hspace{2em}weight, = ctx.saved\_tensors
% \STATE \hspace{2em}\texttt{\# Reconstruct activations from sketches}
\STATE \hspace{2em}A\_reconstructed = reconstruct\_from\_sketches(ctx)
\STATE \hspace{2em}grad\_input = grad\_output @ weight
\STATE \hspace{2em}grad\_bias = grad\_output.sum(0)
\STATE \hspace{2em}grad\_weight = grad\_output.T @ A\_reconstructed
\STATE \hspace{2em}\textbf{return} grad\_input, grad\_weight, grad\_bias, None, None, None
\end{algorithmic}
\end{algorithm}

This design ensures that sketching remains completely transparent to optimization algorithms while providing drop-in replacement capability for existing neural network layers. 

\subsection{Approximation Quality Bounds}
\label{subsec:approximation_quality_bounds}

Our theoretical analysis focuses on the reconstruction quality when using EMA sketches to directly reconstruct activation matrices during backpropagation. 

\begin{assumption}[Bounded Neural Network Activations]
\label{assump:bounded_nn_activations}
The batch activation matrices satisfy $\|(\mathbf{A}^{[\ell]}(j))^{\top}\|_2 \leq M$ for some constant $M > 0$, uniformly over all layers $\ell$, batches $j$, and training iterations.
\end{assumption}

This assumption is satisfied in practice through proper input normalization, ReLU activations with appropriate initialization, and reasonable network depths.

\begin{assumption}[Temporal Coherence]
\label{assump:temporal_coherence}
During neural network training, activation patterns exhibit temporal coherence such that:
\begin{equation*}
\|\mathbf{A}^{[\ell]}(j) - \mathbf{A}^{[\ell]}(n)\|_F \leq \epsilon_{\text{coherence}}
\end{equation*}
for recent batches $j$ close to the current batch $n$, where $\epsilon_{\text{coherence}} > 0$ quantifies the temporal stability.

This assumption is typically satisfied during: (i) late-stage training when the network approaches convergence, (ii) fine-tuning of pre-trained models with stable learning dynamics, and (iii) well-regularized training with moderate learning rates. The assumption may be violated during early training phases, high learning rate regimes, or when encountering significant distributional shifts in the data.
\end{assumption}

\begin{lemma}[EMA Sketch Temporal Expansion]
\label{lem:ema_temporal_expansion}
The EMA sketch updates from Equations~\eqref{eq:X_ema_update}--\eqref{eq:Z_ema_update} can be expressed as exponentially-weighted combinations of historical batch contributions:
\begin{equation}
\mathbf{X}_s^{[\ell]}(n) = (1-\beta) \sum_{j=1}^{n} \beta^{n-j} (\mathbf{A}^{[\ell-1]}(j))^{\top} \boldsymbol{\Upsilon} = \mathbf{A}_{\text{EMA}}^{[\ell-1]}(n) \boldsymbol{\Upsilon} \label{eq:ema_expansion_x}
\end{equation}
with analogous expressions for $\mathbf{Y}_s^{[\ell]}(n)$ and $\mathbf{Z}_s^{[\ell]}(n)$, where:
\begin{equation}
\mathbf{A}_{\text{EMA}}^{[\ell]}(n) := (1-\beta) \sum_{j=1}^{n} \beta^{n-j} (\mathbf{A}^{[\ell]}(j))^{\top} \in \mathbb{R}^{d_{\ell} \times N_b} \label{eq:ema_matrix_definition}
\end{equation}
represents the conceptual EMA-weighted activation matrix that is never explicitly formed but implicitly represented through the sketches.
\end{lemma}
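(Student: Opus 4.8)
The plan is to recognize each EMA update as a first-order linear recursion with zero initial condition and to unroll it by induction on the batch index $n$. Writing the update~\eqref{eq:X_ema_update} in explicit time-indexed form, the sketch satisfies $\mathbf{X}_s^{[\ell]}(n) = \beta\,\mathbf{X}_s^{[\ell]}(n-1) + (1-\beta)(\mathbf{A}^{[\ell-1]}(n))^{\top}\boldsymbol{\Upsilon}$ with $\mathbf{X}_s^{[\ell]}(0) = \mathbf{0}$, by the zero initialization prescribed in Algorithm~\ref{alg:adaptive_sketched_backprop}. I would then establish the closed form~\eqref{eq:ema_expansion_x} by induction on $n$.

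For the base case $n=1$, the zero initial condition gives $\mathbf{X}_s^{[\ell]}(1) = (1-\beta)(\mathbf{A}^{[\ell-1]}(1))^{\top}\boldsymbol{\Upsilon}$, which matches the claimed sum, containing only the $j=1$ term with weight $\beta^{0}$. For the inductive step, I would substitute the hypothesis for $\mathbf{X}_s^{[\ell]}(n-1)$ into the recursion: multiplying the summed terms by $\beta$ raises each exponent $\beta^{n-1-j}$ to $\beta^{n-j}$, while the fresh contribution supplies the missing $j=n$ term with weight $\beta^{0}$, reassembling the full sum $(1-\beta)\sum_{j=1}^{n}\beta^{n-j}(\mathbf{A}^{[\ell-1]}(j))^{\top}\boldsymbol{\Upsilon}$. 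The second equality in~\eqref{eq:ema_expansion_x} is then immediate: since $\boldsymbol{\Upsilon}$ is constant in $j$, it factors out of the sum on the right, leaving exactly $\mathbf{A}_{\text{EMA}}^{[\ell-1]}(n)\,\boldsymbol{\Upsilon}$ by the definition~\eqref{eq:ema_matrix_definition}.

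The $\mathbf{Y}$-sketch case is verbatim identical after replacing the pair $(\mathbf{A}^{[\ell-1]},\boldsymbol{\Upsilon})$ by $(\mathbf{A}^{[\ell]},\boldsymbol{\Omega})$. The only point requiring slightly more care is the $\mathbf{Z}$-sketch, whose update~\eqref{eq:Z_ema_update} carries the Hadamard factor $(\boldsymbol{\Psi}^{[\ell]})^{\top}$. Here I would observe that right-Hadamard multiplication by the fixed row vector $(\boldsymbol{\Psi}^{[\ell]})^{\top}$ is the linear map $\mathbf{M}\mapsto \mathbf{M}\,\mathrm{diag}(\boldsymbol{\Psi}^{[\ell]})$, which commutes with the scalar weights $\beta^{n-j}$ and with the summation; the identical induction therefore yields $\mathbf{Z}_s^{[\ell]}(n) = \bigl(\mathbf{A}_{\text{EMA}}^{[\ell]}(n)\,\boldsymbol{\Phi}\bigr)\odot(\boldsymbol{\Psi}^{[\ell]})^{\top}$.

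I do not anticipate a genuine obstacle: the statement is a deterministic unrolling of a geometric recursion, and because the projection and Hadamard factors are independent of the batch index $j$, linearity does all the work. The only place to remain careful is the $\mathbf{Z}$ update, where one must confirm that the element-wise multiplication distributes over the exponentially weighted sum before factoring it out; identifying it as a fixed linear operator acting on the right settles this cleanly.
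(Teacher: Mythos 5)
Your proposal is correct and follows essentially the same route as the paper: an induction on the batch index $n$ that unrolls the first-order EMA recursion from the zero initial condition and then factors the batch-independent projection $\boldsymbol{\Upsilon}$ out of the exponentially weighted sum. Your explicit verification of the $\mathbf{Z}$-sketch case, identifying the Hadamard factor as the fixed linear map $\mathbf{M}\mapsto\mathbf{M}\,\mathrm{diag}(\boldsymbol{\Psi}^{[\ell]})$, is a sound elaboration of the detail the paper leaves to ``analogous expressions.''
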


\begin{proof}
By induction on batch index $n$. Base case ($n=1$): $\mathbf{X}_s^{[\ell]}(1) = (1-\beta)(\mathbf{A}^{[\ell-1]}(1))^{\top}\boldsymbol{\Upsilon}$. For the inductive step, assume~\eqref{eq:ema_expansion_x} holds for $n-1$:
\begin{align*}
\mathbf{X}_s^{[\ell]}(n) &= \beta \mathbf{X}_s^{[\ell]}(n-1) + (1-\beta) (\mathbf{A}^{[\ell-1]}(n))^{\top} \boldsymbol{\Upsilon} \\
&= \beta(1-\beta) \sum_{j=1}^{n-1} \beta^{n-1-j} (\mathbf{A}^{[\ell-1]}(j))^{\top} \boldsymbol{\Upsilon} + (1-\beta) (\mathbf{A}^{[\ell-1]}(n))^{\top} \boldsymbol{\Upsilon} \\
&= (1-\beta) \sum_{j=1}^{n} \beta^{n-j} (\mathbf{A}^{[\ell-1]}(j))^{\top} \boldsymbol{\Upsilon} = \mathbf{A}_{\text{EMA}}^{[\ell-1]}(n) \boldsymbol{\Upsilon}
\end{align*}
completing the proof. This demonstrates that our EMA sketches are exact projections of the conceptual matrix $\mathbf{A}_{\text{EMA}}^{[\ell]}(n)$.
\end{proof}

\begin{theorem}[EMA Activation Matrix Reconstruction Error]
\label{thm:ema_activation_reconstruction}
Under Assumption~\ref{assump:bounded_nn_activations}, the reconstruction of $\mathbf{A}_{\text{EMA}}^{[\ell]}(n)$ from EMA sketches satisfies:
\begin{equation*}
\mathbb{E}[\|\mathbf{A}_{\text{EMA}}^{[\ell]}(n) - \widetilde{\mathbf{A}}_{\text{EMA}}^{[\ell]}(n)\|_F] \leq \sqrt{6} \cdot \tau_{r+1}(\mathbf{A}_{\text{EMA}}^{[\ell]}(n))
\end{equation*}
\end{theorem}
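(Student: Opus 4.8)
The plan is to reduce the statement to the control-theoretic sketching bound \eqref{eq:sketching_error} by identifying the conceptual EMA matrix $\mathbf{A}_{\text{EMA}}^{[\ell]}(n) \in \mathbb{R}^{d_\ell \times N_b}$ with the trajectory matrix $\mathbf{U}$ of the abstract framework. The central observation, already supplied by \Cref{lem:ema_temporal_expansion}, is that the three maintained EMA sketches are not approximate, batch-dependent quantities but \emph{exact} linear sketches of the single matrix $\mathbf{A}_{\text{EMA}}^{[\ell]}(n)$: the EMA recursion and the random projection commute because both are linear, so the exponential weighting is absorbed into $\mathbf{A}_{\text{EMA}}^{[\ell]}(n)$ without contributing any sketching error of its own. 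Consequently the only source of error in reconstructing $\mathbf{A}_{\text{EMA}}^{[\ell]}(n)$ is the low-rank sketch compression itself, which is exactly what \eqref{eq:sketching_error} controls.

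First I would record, from \Cref{lem:ema_temporal_expansion} and its stated analogues, the identities $\mathbf{Y}_s^{[\ell]}(n) = \mathbf{A}_{\text{EMA}}^{[\ell]}(n)\boldsymbol{\Omega}$ and $\mathbf{Z}_s^{[\ell]}(n) = \bigl(\mathbf{A}_{\text{EMA}}^{[\ell]}(n)\boldsymbol{\Phi}\bigr)\odot(\boldsymbol{\Psi}^{[\ell]})^{\top}$, which exhibit the range and core sketches of $\mathbf{A}_{\text{EMA}}^{[\ell]}(n)$ with respect to the Gaussian matrices $\boldsymbol{\Omega},\boldsymbol{\Phi},\boldsymbol{\Psi}^{[\ell]}$, together with the companion co-range sketch. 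Next I would verify that the two-stage QR / least-squares reconstruction of \Cref{subsec:activation_reconstruction}, producing $\widetilde{\mathbf{A}}_{\text{EMA}}^{[\ell]}(n)$ through \eqref{eq:structure_reconstruction}--\eqref{eq:activation_projection}, coincides with (or is an algebraic rewriting of) the canonical reconstruction $\widetilde{\mathbf{U}} = \mathbf{Q}\mathbf{C}\mathbf{P}^{*}$ applied to $\mathbf{U} = \mathbf{A}_{\text{EMA}}^{[\ell]}(n)$. With that identification, \eqref{eq:sketching_error} applies verbatim, and taking expectation over $\boldsymbol{\Upsilon},\boldsymbol{\Omega},\boldsymbol{\Phi},\boldsymbol{\Psi}^{[\ell]}$ yields the claimed $\sqrt{6}\,\tau_{r+1}$ bound.

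Assumption~\ref{assump:bounded_nn_activations} enters only to guarantee that the right-hand side is finite and the bound meaningful: since the EMA weights $(1-\beta)\beta^{\,n-j}$ are nonnegative and sum to $1-\beta^{n}\le 1$, subadditivity of the spectral norm gives $\|\mathbf{A}_{\text{EMA}}^{[\ell]}(n)\|_2 \le M$, so all singular values and hence the tail energy $\tau_{r+1}(\mathbf{A}_{\text{EMA}}^{[\ell]}(n))$ are bounded. I would emphasize that this theorem deliberately does not invoke \Cref{assump:temporal_coherence}: it measures error relative to the EMA matrix itself rather than relative to the current batch activation, so no statement about temporal drift is needed here.

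The main obstacle I anticipate is the reconstruction-equivalence step. The procedure in \Cref{subsec:activation_reconstruction} is written in a neural-network-specific form, using a transposed QR of the X-sketch, an intermediate solve against $\mathbf{Z}_s^{[\ell]}$, and a final batch-space projection $\boldsymbol{\Omega}(\mathbf{Y}_s^{[\ell]})^{\dagger}$, whereas \eqref{eq:sketching_error} is stated for the canonical $\mathbf{Q}\mathbf{C}\mathbf{P}^{*}$ reconstruction. Establishing that these two produce the same matrix, so that the error bound transfers without loss, requires carefully matching the factors $\mathbf{Q}_Y,\mathbf{Q}_X$ and the pseudo-inverses to the canonical $\mathbf{Q},\mathbf{P}$ and to the core solve $\mathbf{C} = (\boldsymbol{\Phi}\mathbf{Q})^{\dagger}\mathbf{Z}((\boldsymbol{\Psi}\mathbf{P})^{\dagger})^{*}$; any genuine discrepancy would need to be absorbed as an explicit additional term rather than suppressed, since this is precisely the step that decides whether the constant $\sqrt{6}$ is preserved exactly.
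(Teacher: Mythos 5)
Your proposal follows essentially the same route as the paper's own proof: use \Cref{lem:ema_temporal_expansion} to recognize the maintained EMA sketches as \emph{exact} Gaussian sketches of $\mathbf{A}_{\text{EMA}}^{[\ell]}(n)$, then transfer the control-theoretic bound \eqref{eq:sketching_error} verbatim, with Assumption~\ref{assump:bounded_nn_activations} serving only to keep the tail energy finite. The one point of divergence is that you rightly flag the equivalence between the network-specific reconstruction of \Cref{subsec:activation_reconstruction} and the canonical $\mathbf{Q}\mathbf{C}\mathbf{P}^{*}$ procedure as a step requiring verification, whereas the paper's proof simply asserts the reconstruction is ``identical''---an assertion that is not literal, since the paper's $\mathbf{Z}$-sketch is $d_\ell \times s$ with a vector $\boldsymbol{\Psi}^{[\ell]}$ rather than the $s \times s$ core sketch $\boldsymbol{\Phi}\mathbf{U}\boldsymbol{\Psi}^{*}$ of the cited framework, so your caution identifies a gap present in the paper itself rather than one introduced by your argument.
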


\begin{proof}
Using Lemma~\ref{lem:ema_temporal_expansion}, our sketch triplet satisfies:
\begin{align*}
\mathbf{X}_s^{[\ell]}(n) = \mathbf{A}_{\text{EMA}}^{[\ell]}(n) \boldsymbol{\Upsilon},\;\;
\mathbf{Y}_s^{[\ell]}(n) = \mathbf{A}_{\text{EMA}}^{[\ell]}(n) \boldsymbol{\Omega},\;\;
\mathbf{Z}_s^{[\ell]}(n) = (\mathbf{A}_{\text{EMA}}^{[\ell]}(n) \boldsymbol{\Phi}) \odot (\boldsymbol{\Psi}^{[\ell]})^{\top}
\end{align*}
These are exact projections of $\mathbf{A}_{\text{EMA}}^{[\ell]}(n)$ by the same random Gaussian projection matrices used in \cite{alshehri2024sketching,RMuthukumar_DPKouri_MUdell_2021a} and the reconstruction procedure is also identical. Therefore, the bound \eqref{eq:sketching_error} applies directly.
\end{proof}

\begin{theorem}[Gradient Reconstruction Error via EMA Approximation]
\label{thm:gradient_reconstruction_error}
Under Assumptions~\ref{assump:bounded_nn_activations} and~\ref{assump:temporal_coherence}, the gradient computed using reconstructed EMA activations satisfies:
\begin{align*}
&\mathbb{E}[\|\nabla_{\mathbf{W}^{[\ell]}} \mathcal{L} - \widehat{\nabla}_{\mathbf{W}^{[\ell]}} \mathcal{L}\|_F] \leq \|(\boldsymbol{\delta}^{[\ell]})^{\top}\|_2 \left[\sqrt{6} \tau_{r+1}(\mathbf{A}_{\text{EMA}}^{[\ell-1]}(n)) + \mathcal{O}(\epsilon_{\text{coherence}})\right] \label{eq:gradient_error_bound}
\end{align*}
where $\widehat{\nabla}_{\mathbf{W}^{[\ell]}} \mathcal{L} = (\boldsymbol{\delta}^{[\ell]})^{\top} \widetilde{\mathbf{A}}_{\text{EMA}}^{[\ell-1]}(n)$ uses the reconstructed activations.
\end{theorem}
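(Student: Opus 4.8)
The plan is to reduce the gradient error to an activation reconstruction error and then split that error into a sketching contribution (controlled by the already-established Theorem~\ref{thm:ema_activation_reconstruction}) and a temporal EMA-averaging contribution (controlled by the two assumptions). First I would subtract \eqref{eq:sketched_gradient} from \eqref{eq:standard_gradient}. Because the error signal $\boldsymbol{\delta}^{[\ell]}$ is computed exactly and is \emph{not} sketched, it is common to both gradients, so
\[
\nabla_{\mathbf{W}^{[\ell]}}\mathcal{L} - \widehat{\nabla}_{\mathbf{W}^{[\ell]}}\mathcal{L} = (\boldsymbol{\delta}^{[\ell]})^{\top}\bigl(\mathbf{A}^{[\ell-1]}(n) - \widetilde{\mathbf{A}}_{\text{EMA}}^{[\ell-1]}(n)\bigr).
\]
Applying the submultiplicative estimate $\|MB\|_F \le \|M\|_2\,\|B\|_F$ factors out $\|(\boldsymbol{\delta}^{[\ell]})^{\top}\|_2$, so it remains to bound $\mathbb{E}\bigl[\|\mathbf{A}^{[\ell-1]}(n) - \widetilde{\mathbf{A}}_{\text{EMA}}^{[\ell-1]}(n)\|_F\bigr]$.

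Next I would insert the conceptual EMA matrix $\mathbf{A}_{\text{EMA}}^{[\ell-1]}(n)$ from Lemma~\ref{lem:ema_temporal_expansion} as an intermediate term and use the triangle inequality, invoking transpose-invariance of the Frobenius norm to reconcile the feature-space and batch-space orientations:
\[
\|\mathbf{A}^{[\ell-1]}(n) - \widetilde{\mathbf{A}}_{\text{EMA}}^{[\ell-1]}(n)\|_F \le \underbrace{\|\mathbf{A}^{[\ell-1]}(n) - (\mathbf{A}_{\text{EMA}}^{[\ell-1]}(n))^{\top}\|_F}_{\text{temporal}} + \underbrace{\|\mathbf{A}_{\text{EMA}}^{[\ell-1]}(n) - \widetilde{\mathbf{A}}_{\text{EMA}}^{[\ell-1]}(n)\|_F}_{\text{reconstruction}}.
\]
The reconstruction term is disposed of immediately: since Lemma~\ref{lem:ema_temporal_expansion} shows the sketch triplet consists of \emph{exact} Gaussian projections of $\mathbf{A}_{\text{EMA}}^{[\ell-1]}(n)$, Theorem~\ref{thm:ema_activation_reconstruction} gives $\mathbb{E}[\cdot] \le \sqrt{6}\,\tau_{r+1}(\mathbf{A}_{\text{EMA}}^{[\ell-1]}(n))$ directly.

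The substance of the argument is the temporal term. Using the transposed expansion $(\mathbf{A}_{\text{EMA}}^{[\ell-1]}(n))^{\top} = (1-\beta)\sum_{j=1}^{n}\beta^{n-j}\mathbf{A}^{[\ell-1]}(j)$ together with the geometric identity $(1-\beta)\sum_{j=1}^{n}\beta^{n-j} = 1-\beta^{n}$, I would rewrite the current activation against its EMA as
\[
\mathbf{A}^{[\ell-1]}(n) - (\mathbf{A}_{\text{EMA}}^{[\ell-1]}(n))^{\top} = (1-\beta)\sum_{j=1}^{n}\beta^{n-j}\bigl(\mathbf{A}^{[\ell-1]}(n) - \mathbf{A}^{[\ell-1]}(j)\bigr) + \beta^{n}\,\mathbf{A}^{[\ell-1]}(n).
\]
Taking norms, the weighted sum of increments is bounded via Assumption~\ref{assump:temporal_coherence} (each recent difference is $\le \epsilon_{\text{coherence}}$, and the weights sum to $1-\beta^{n}\le 1$), producing an $\mathcal{O}(\epsilon_{\text{coherence}})$ contribution, while the residual $\beta^{n}\mathbf{A}^{[\ell-1]}(n)$ is controlled by Assumption~\ref{assump:bounded_nn_activations} as $\beta^{n}\mathcal{O}(M)$. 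Recombining and reinstating $\|(\boldsymbol{\delta}^{[\ell]})^{\top}\|_2$ yields the stated bound.

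The main obstacle I anticipate lies precisely in this temporal term: Assumption~\ref{assump:temporal_coherence} is asserted only for batches $j$ \emph{close} to $n$, whereas the EMA sum formally runs over all history. The clean way to reconcile these is to split the geometric sum at an effective window $j_0$ beyond which $\beta^{n-j}$ is negligible—within the window coherence bounds each increment by $\epsilon_{\text{coherence}}$, and the tail contributes at most $\mathcal{O}(\beta^{n-j_0}M)$, which the geometric decay renders lower order. The only delicate bookkeeping is making the implicit constant in $\mathcal{O}(\epsilon_{\text{coherence}})$ explicit (it inherits a $1/(1-\beta)$-type dependence through the window length) and arguing that the early-training transient $\beta^{n}M$ is dominated for $n$ large; everything else is a mechanical application of the two assumptions and Theorem~\ref{thm:ema_activation_reconstruction}.
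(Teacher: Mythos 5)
Your proposal follows essentially the same route as the paper's proof: factor out $\|(\boldsymbol{\delta}^{[\ell]})^{\top}\|_2$ via submultiplicativity of the norm, insert $\mathbf{A}_{\text{EMA}}^{[\ell-1]}(n)$ through the triangle inequality, dispose of the reconstruction term with Theorem~\ref{thm:ema_activation_reconstruction}, and control the temporal term by splitting the exponentially weighted sum between coherent recent batches and a geometrically down-weighted tail. If anything, you are slightly more careful than the paper, which silently treats $(1-\beta)\sum_{j=1}^{n}\beta^{n-j}$ as equal to $1$ and thereby drops the residual $\beta^{n}\mathbf{A}^{[\ell-1]}(n)$ that you correctly isolate and bound by $\beta^{n}\mathcal{O}(M)$ via Assumption~\ref{assump:bounded_nn_activations}.
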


\begin{proof}
We decompose the gradient error into two components: sketching error and temporal approximation error.

The total gradient error can be decomposed as:
\begin{align*}
\|\nabla_{\mathbf{W}^{[\ell]}} \mathcal{L} - \widehat{\nabla}_{\mathbf{W}^{[\ell]}} \mathcal{L}\|_F 
&= \|(\boldsymbol{\delta}^{[\ell]})^{\top} \mathbf{A}^{[\ell-1]}(n) - (\boldsymbol{\delta}^{[\ell]})^{\top} \widetilde{\mathbf{A}}_{\text{EMA}}^{[\ell-1]}(n)\|_F \\
&\leq \|(\boldsymbol{\delta}^{[\ell]})^{\top}\|_2 \|\mathbf{A}^{[\ell-1]}(n) - \widetilde{\mathbf{A}}_{\text{EMA}}^{[\ell-1]}(n)\|_F
\end{align*}

by the submultiplicative property of matrix norms. We further decompose
\begin{align*}
\|\mathbf{A}^{[\ell-1]}(n) - \widetilde{\mathbf{A}}_{\text{EMA}}^{[\ell-1]}(n)\|_F 
\leq \|\mathbf{A}^{[\ell-1]}(n) - \mathbf{A}_{\text{EMA}}^{[\ell-1]}(n)\|_F 
 + \|\mathbf{A}_{\text{EMA}}^{[\ell-1]}(n) - \widetilde{\mathbf{A}}_{\text{EMA}}^{[\ell-1]}(n)\|_F
\end{align*}

For the first term, note that $\mathbf{A}_{\text{EMA}}^{[\ell]}(n) = (1-\beta) \sum_{j=1}^{n} \beta^{n-j} (\mathbf{A}^{[\ell]}(j))^{\top}$ by definition. Taking transposes:
\begin{align*}
\|\mathbf{A}^{[\ell-1]}(n) - \mathbf{A}_{\text{EMA}}^{[\ell-1]}(n)\|_F 
&= \left\|(1-\beta) \sum_{j=1}^{n} \beta^{n-j} (\mathbf{A}^{[\ell-1]}(n) - \mathbf{A}^{[\ell-1]}(j))\right\|_F \\
&\leq (1-\beta) \sum_{j=1}^{n} \beta^{n-j} \|\mathbf{A}^{[\ell-1]}(n) - \mathbf{A}^{[\ell-1]}(j)\|_F
\end{align*}
Under Assumption~\ref{assump:temporal_coherence}, for batches where temporal coherence holds, we have $\|\mathbf{A}^{[\ell]}(n) - \mathbf{A}^{[\ell]}(j)\|_F \leq \epsilon_{\text{coherence}}$. For batches far in the past, this bound may not hold, but their exponential weights $\beta^{n-j}$ decay. Thus,
$$\|\mathbf{A}^{[\ell-1]}(n) - \mathbf{A}_{\text{EMA}}^{[\ell-1]}(n)\|_F \leq \mathcal{O}(\epsilon_{\text{coherence}})$$

For the second term, Theorem~\ref{thm:ema_activation_reconstruction} gives
$$\mathbb{E}_{\text{sketching}}[\|\mathbf{A}_{\text{EMA}}^{[\ell-1]}(n) - \widetilde{\mathbf{A}}_{\text{EMA}}^{[\ell-1]}(n)\|_F] \leq \sqrt{6} \tau_{r+1}(\mathbf{A}_{\text{EMA}}^{[\ell-1]}(n))$$

Therefore,
\begin{equation*}
\mathbb{E}[\|(\boldsymbol{\delta}^{[\ell]})^{\top}\|_2 \|\mathbf{A}_{\text{EMA}}^{[\ell-1]}(n) - \widetilde{\mathbf{A}}_{\text{EMA}}^{[\ell-1]}(n)\|_F] \leq \|(\boldsymbol{\delta}^{[\ell]})^{\top}\|_2 \sqrt{6} \tau_{r+1}(\mathbf{A}_{\text{EMA}}^{[\ell-1]}(n))
\end{equation*}

Combining and taking expectation over sketching randomness (conditioning on the current batch), we have
\begin{align*}
\mathbb{E}[\|\nabla_{\mathbf{W}^{[\ell]}} \mathcal{L} - \widehat{\nabla}_{\mathbf{W}^{[\ell]}} \mathcal{L}\|_F] 
&\leq \|(\boldsymbol{\delta}^{[\ell]})^{\top}\|_2 \left[\mathcal{O}(\epsilon_{\text{coherence}}) + \sqrt{6} \tau_{r+1}(\mathbf{A}_{\text{EMA}}^{[\ell-1]}(n))\right]
\end{align*}
This completes the proof.
\end{proof}
% \begin{remark}[Implementation via Structure and Projection]
% In practice, we compute $\widetilde{\mathbf{A}}_{\text{EMA}}^{[\ell]}(n)$ through a two-step procedure: first reconstructing the structure matrix $\widetilde{\mathbf{G}}_{\text{EMA}}^{[\ell]} \in \mathbb{R}^{d_{\ell} \times d_{\ell}}$ via~\eqref{eq:structure_reconstruction}, then projecting to batch space via~\eqref{eq:activation_projection}. The projection $\boldsymbol{\Omega} (\mathbf{Y}_s^{[\ell]})^{\dagger} \widetilde{\mathbf{G}}_{\text{EMA}}^{[\ell]}$ preserves the approximation quality because the range sketch $\mathbf{Y}_s^{[\ell]}$ captures the dominant subspace of $\mathbf{A}_{\text{EMA}}^{[\ell]}(n)$.
% \end{remark}
It follows that, under strong temporal coherence ($\epsilon_{\text{coherence}} \ll \tau_{r+1}(\mathbf{A}_{\text{EMA}}^{[\ell-1]}(n))$), the dominant error term is the sketching error
\begin{equation*}
\mathbb{E}[\|\nabla_{\mathbf{W}^{[\ell]}} \mathcal{L} - \widehat{\nabla}_{\mathbf{W}^{[\ell]}} \mathcal{L}\|_F] \approx \|(\boldsymbol{\delta}^{[\ell]})^{\top}\|_2 \sqrt{6} \tau_{r+1}(\mathbf{A}_{\text{EMA}}^{[\ell-1]}(n)).
\end{equation*}

\subsection{Gradient Monitoring}
\label{subsec:monitoring_application}

While our sketching framework theoretically supports both direct training acceleration and gradient monitoring, experimental validation reveals that monitoring applications represent the primary domain where sketching provides substantial practical benefits. The distinction arises from different accuracy requirements and computational constraints:

\noindent \textit{Training vs. Monitoring Requirements.}
Direct training application requires gradient approximations that preserve convergence properties and maintain optimization efficiency. Sketch maintenance overhead and reconstruction costs can offset memory savings, limiting practical acceleration benefits.
Gradient monitoring applications, primarily, need to capture training trends, detect anomalies, and preserve statistical properties rather than exact gradient values. This tolerance for controlled approximation error makes sketching particularly well-suited, achieving substantial memory reductions while maintaining diagnostic capability.

\textit{Sketch-Based Monitoring Metrics.}
Our framework enables comprehensive gradient analysis through sketch-derived metrics:
\begin{itemize}
\item{Gradient Norm Estimation}: The Z-sketch norm $\|\mathbf{Z}_s^{[\ell]}\|_F$ provides efficient approximation of gradient magnitude without materializing full gradient matrices.
% \item {Dead Neuron Detection}: Y-sketch analysis identifies neurons with consistently small gradient contributions through $\|\mathbf{Y}_s^{[\ell]}(:,j)\|_2 < \epsilon$ for neuron $j$.
\item {Gradient Diversity Measurement}: Stable rank estimation via 
  $\text{rank}_{\text{stable}}(\mathbf{Y}_s^{[\ell]}) = \|\mathbf{Y}_s^{[\ell]}\|_F^2/\|\mathbf{Y}_s^{[\ell]}\|_2^2$ 
  quantifies gradient diversity and training dynamics without requiring expensive 
  singular value decomposition.
\item {Training Stability Analysis}: EMA sketch evolution patterns enable detection of gradient explosion, vanishing gradients, and other pathological training behaviors.
\end{itemize}
These monitoring capabilities provide comprehensive training diagnostics while consuming only $\mathcal{O}(k \cdot d_{\text{hidden}})$ memory per layer where $k = 2r + 1$, compared to $\mathcal{O}(d_{\text{hidden}}^2 \cdot T)$ for full gradient storage over temporal window (epochs) $T$, enabling scalable analysis of large networks across extended training periods.

\subsection{Memory Complexity Analysis}
\label{subsec:complexity_analysis_detailed}
Our sketching framework achieves memory reductions in two distinct scenarios: per-iteration 
training memory and persistent gradient monitoring. We analyze each separately to clarify 
where substantial savings occur.

\textit{Per-Iteration Training Memory.}
Standard backpropagation stores activation matrices $\mathbf{A}^{[\ell]} \in \mathbb{R}^{N_b \times d_{\ell}}$ requiring $\mathcal{O}(L \cdot N_b \cdot d_{\text{hidden}})$ memory. Our approach maintains sketches $\mathbf{X}_s^{[\ell]}, \mathbf{Y}_s^{[\ell]}, \mathbf{Z}_s^{[\ell]}$ with dimensions $d_{\text{hidden}} \times k$ where $k = 2r+1$, requiring $\mathcal{O}(L \cdot k \cdot d_{\text{hidden}})$ memory. For batch size $N_b = 128$ and adaptive rank $r = 2$-$16$ (giving $k = 5$-$33$), per-layer memory ratios range from $\frac{15}{128} \approx 0.12$ to $\frac{99}{128} \approx 0.77$, yielding 23-88\% memory reduction per iteration.

\textit{Persistent Gradient Monitoring Memory.}
Traditional gradient monitoring requires storing gradient matrices $\nabla_{\mathbf{W}^{[\ell]}} \mathcal{L} \in \mathbb{R}^{d_{\ell} \times d_{\ell-1}}$ across temporal window $T$, requiring $\mathcal{O}(L \cdot d_{\text{hidden}}^2 \cdot T)$ memory. Our approach maintains one set of EMA sketches requiring $\mathcal{O}(L \cdot k \cdot d_{\text{hidden}})$ memory, achieving reduction factor $\frac{k}{d_{\text{hidden}} \cdot T}$.

\section{Experiments and Results}
\label{sec:experiments_and_results}

We present comprehensive empirical evaluation of our sketching framework across two 
computational domains: image classification (MNIST and CIFAR-10) and scientific 
computing (physics-informed neural networks). Our experimental design addresses fundamental questions about the practical utility of sketched backpropagation: Can sketched gradient computation maintain training effectiveness? Where does the approach provide the most substantial benefits?

\subsection{Experimental Design and Methodology}
\label{subsec:experimental_design}
\subsubsection{Algorithmic Variants}

We design a controlled experimental framework that isolates the contributions of our sketching approach through three carefully configured variants:

\textbf{Standard Backpropagation}: Implements conventional gradient computation $\nabla_{\mathbf{W}^{[\ell]}} \mathcal{L} = (\mathbf{\delta}^{[\ell]})^{\top} \mathbf{A}^{[\ell-1]}$ using PyTorch's automatic differentiation. This baseline establishes performance and memory benchmarks for comparison.

\textbf{Sketched Backpropagation (Fixed Rank)}: Deploys Algorithm~\ref{alg:adaptive_sketched_backprop} with fixed parameters: sketch rank $r = 2$, EMA momentum $\beta = 0.95$, and dimensions $k = s = 2r + 1 = 5$. This configuration isolates the core sketching mechanism without adaptive components.

\textbf{Adaptive Sketched Backpropagation}: Implements the complete framework including dynamic rank adjustment with $r \in [2, 16]$, initial rank $r_0 = 2$, and the adaptation parameters from Algorithm~\ref{alg:adaptive_sketched_backprop}. This demonstrates autonomous memory-accuracy optimization.
\subsubsection{Network Architectures}

Our architectural choices maximize gradient computation impact while ensuring fair comparisons across methods:

\textbf{MNIST}: Four-layer MLP with 512-dimensional hidden layers, $\tanh$ activations, and 1.33M parameters. The uniform layer dimensions optimize sketching effectiveness while providing sufficient computational complexity for evaluating sketching performance.

\textbf{CIFAR-10}: Hybrid CNN-MLP with convolutional feature extraction followed by three 512-dimensional fully-connected layers. Sketching applies only to dense layers, demonstrating selective deployment capabilities.

\textbf{PINNs}: Four-layer network with 50-dimensional hidden layers for solving the 2D Poisson equation $-\Delta u = 4\pi^2 \sin(2\pi x)\sin(2\pi y)$ on $[0,1]^2$. This scientific computing application requires exact gradient computation for PDE residual 
evaluation, making it ideal for testing monitoring-only configurations.

All experiments use Adam optimization with a learning rate of $1e-3$ and batch size $N_b = 128$ to maintain consistent sketching matrix dimensions.

\textbf{Gradient Monitoring (MNIST)}: Two contrasting sixteen-layer MLPs with 1024 neurons in each hidden layer designed to exhibit different gradient pathologies. The ``healthy" configuration uses Kaiming initialization, ReLU activations, and Adam optimization, while the ``problematic" configuration employs Xavier initialization with small gain (0.5), tanh activations, and SGD optimization to induce vanishing gradients and training difficulties.

\subsection{Training Results}
\label{subsec:training_results}

\subsubsection{Convergence and Accuracy Preservation}
Across tested domains, sketched backpropagation demonstrates a predictable accuracy-memory tradeoff. MNIST classification (Figure~\ref{fig:mnist_results}) shows sketched methods achieve 93-95\% test accuracy compared to 98\% for standard backpropagation within 50 epochs. This performance gap reflects the gradient approximation error bounded by Theorem~\ref{thm:gradient_reconstruction_error}. The similar convergence trajectories indicate that sketch-based gradient computation preserves essential optimization structure despite the approximation.

\begin{figure}[ht]
    \centering
    \includegraphics[width=0.45\textwidth]{./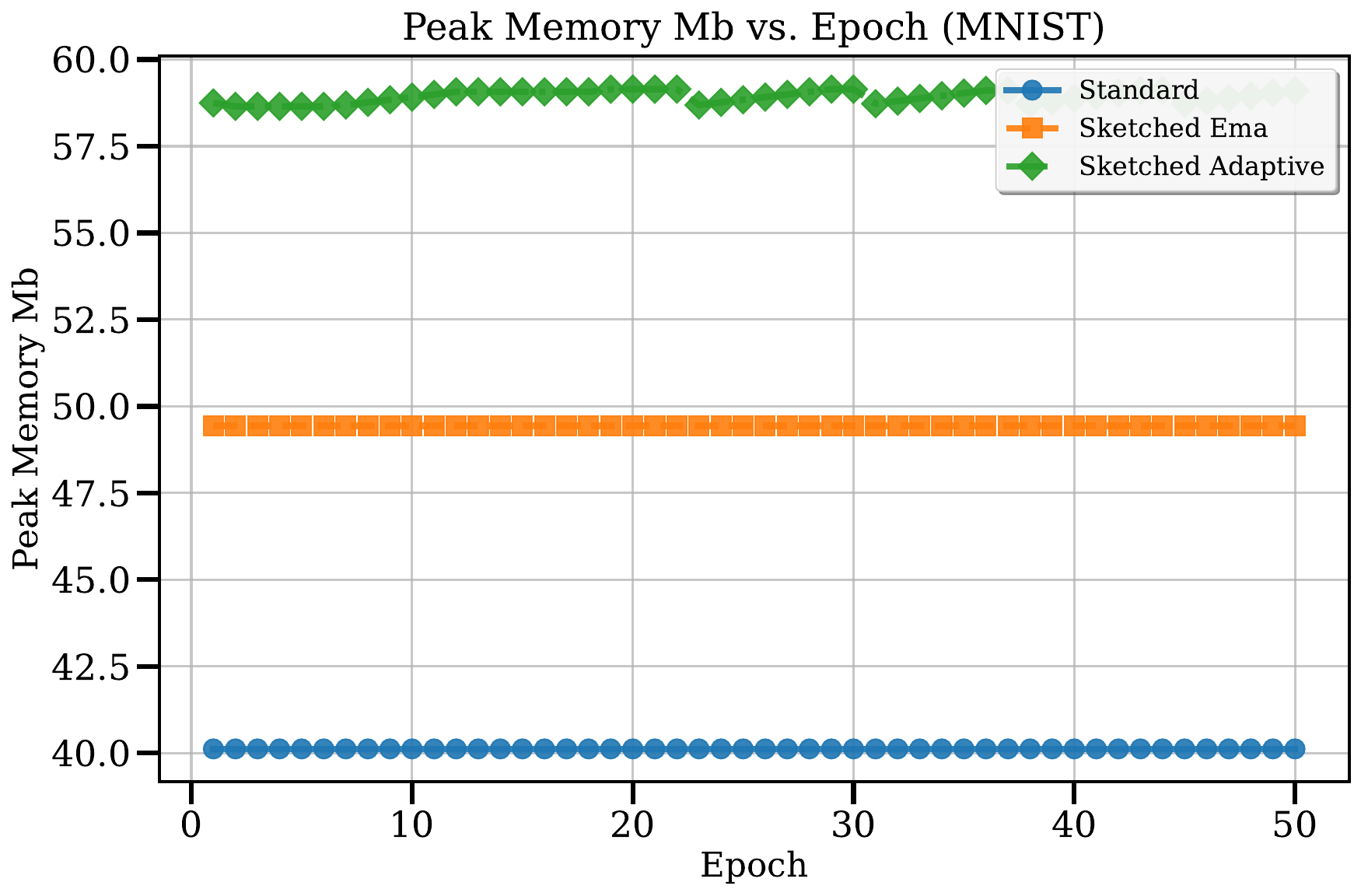}
    \includegraphics[width=0.45\textwidth]{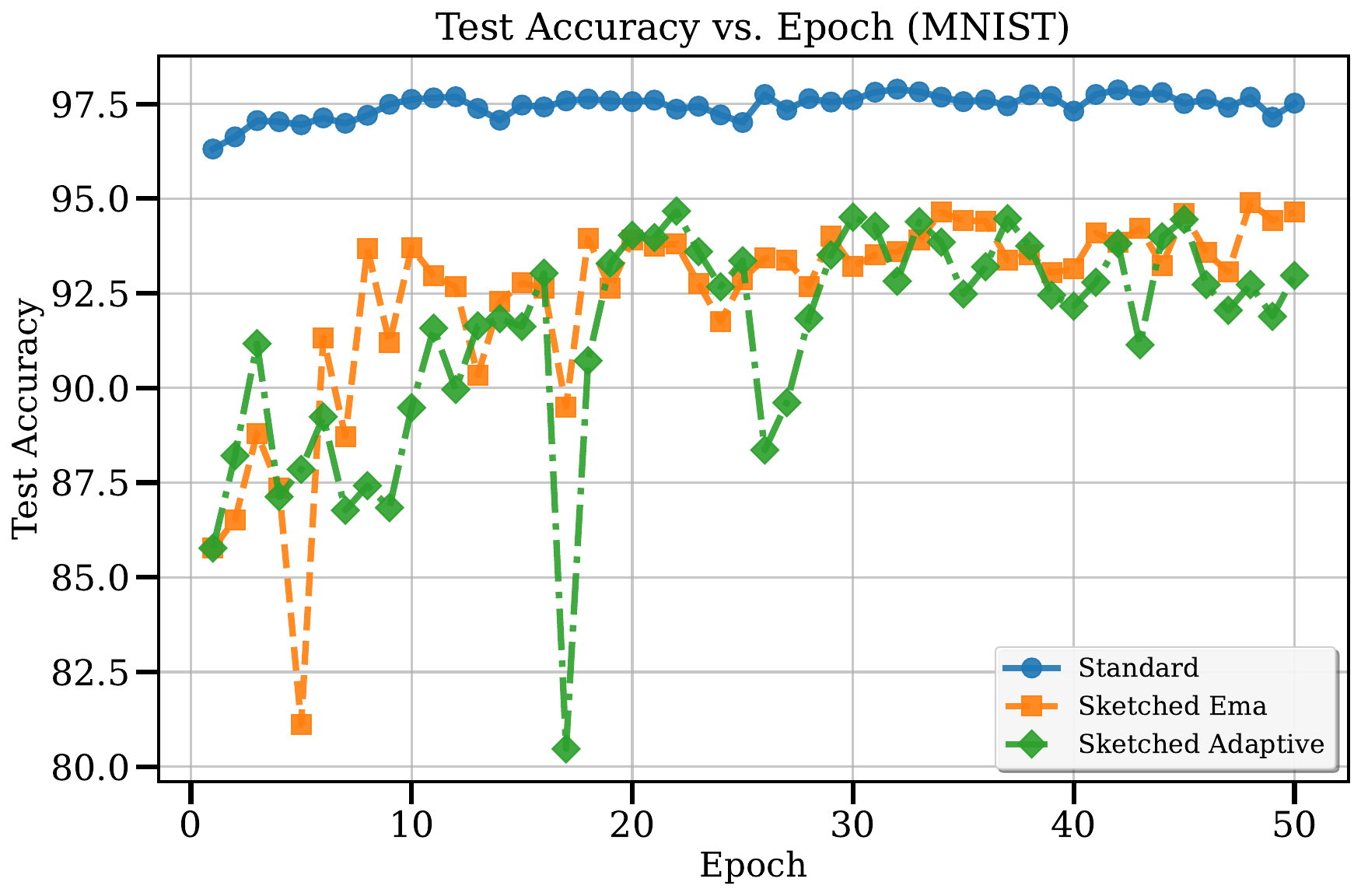}
    \caption{MNIST classification results: (Left) Peak memory usage comparison across methods. 
(Right) Training accuracy showing 3-5\% performance gap with preserved convergence dynamics.}
    \label{fig:mnist_results}
\end{figure}

CIFAR-10 experiments (Figure~\ref{fig:cifar10_results}) demonstrate effective gradient approximation in the hybrid CNN-MLP architecture despite increased complexity. Both standard and sketched backpropagation achieve 80\% test accuracy, indicating that selective sketching on dense layers while maintaining convolutional feature extraction preserves training effectiveness. The convergence trajectories remain similar, validating that sketch-based gradient computation 
can be selectively applied to fully-connected layers without compromising overall training 
effectiveness.

\begin{figure}[ht]
    \centering
    \includegraphics[width=0.45\textwidth]{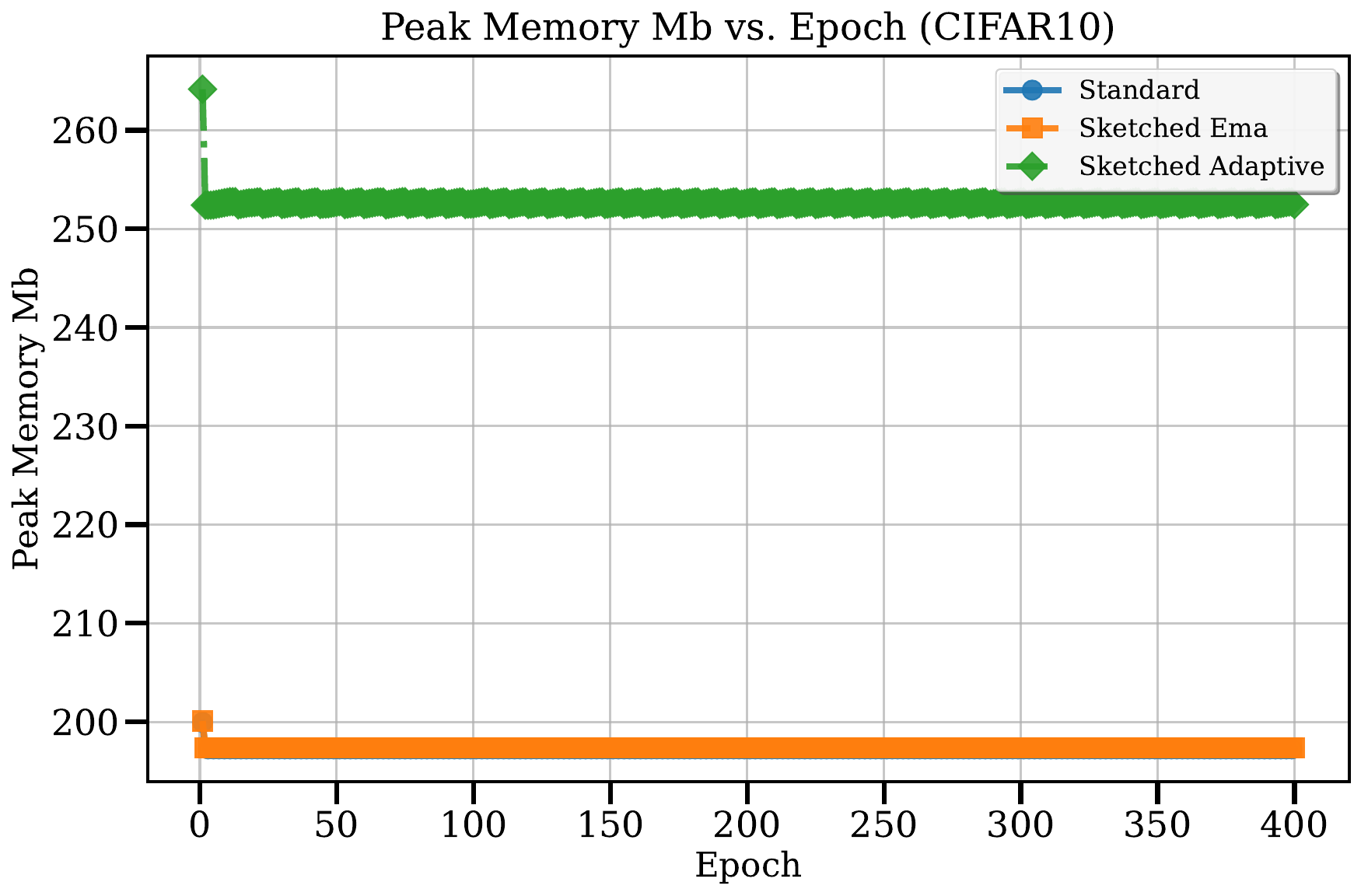}
    \includegraphics[width=0.45\textwidth]{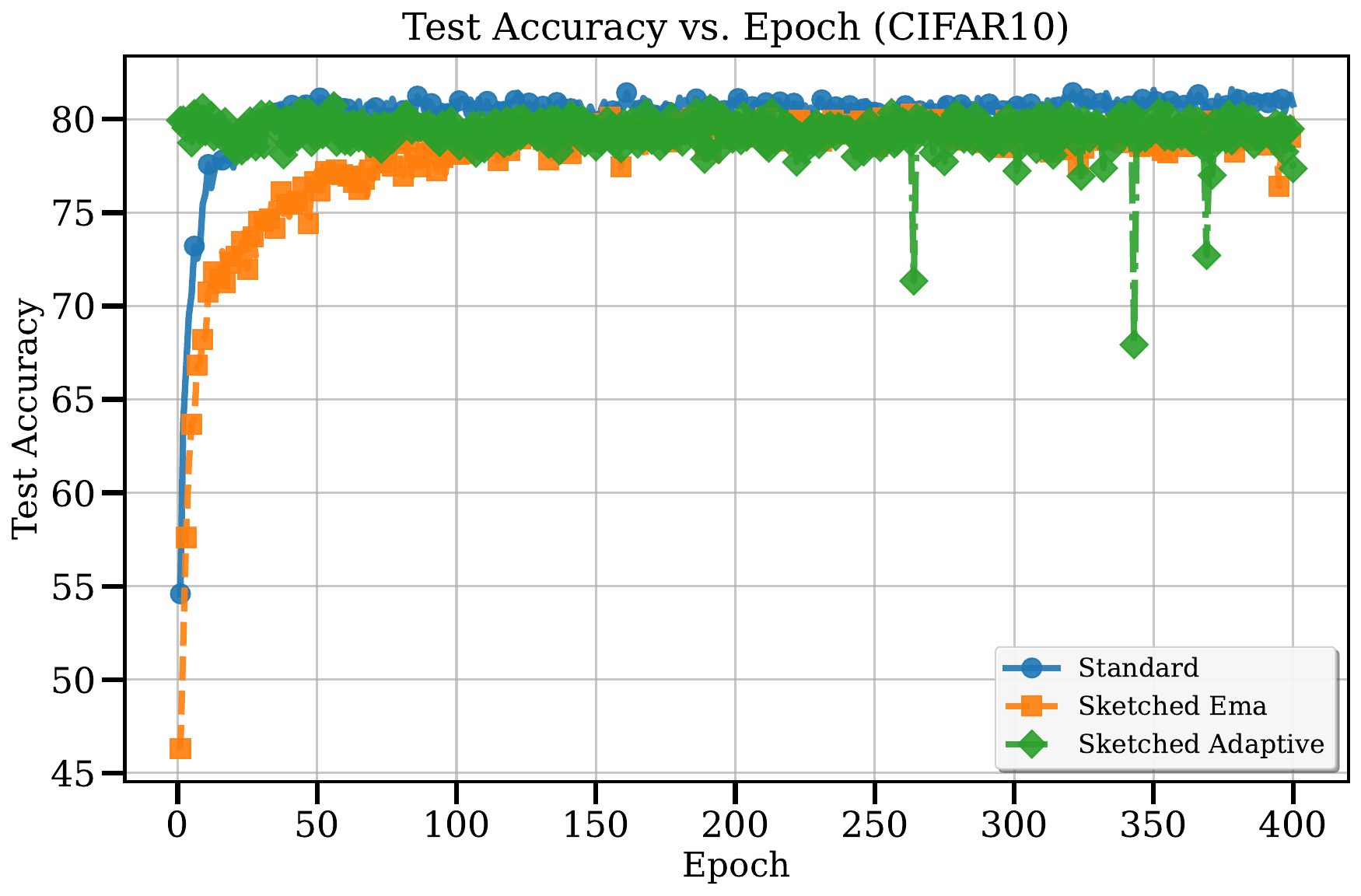}
    \caption{CIFAR-10 results for hybrid CNN-MLP architecture: (Left) Memory usage with sketched fully-connected layers. (Right) Accuracy preservation demonstrating effective integration with convolutional components.}
    \label{fig:cifar10_results}
\end{figure}
% \begin{remark}[Experimental Validation of Bounds]
Theorem~\ref{thm:gradient_reconstruction_error} predicts accuracy degradation proportional to $\tau_{r+1}$. Experimental results validate this prediction across different deployment strategies: with low rank %$r=2$ (k=5) 
applied to all hidden layers, MNIST shows 3-5\% accuracy reduction, demonstrating the theoretical tradeoff. In contrast, CIFAR-10 maintains equivalent accuracy (80\% for both standard and sketched variants) through selective sketching of fully-connected layers. This demonstrates that strategic layer selection can preserve accuracy while achieving memory savings. The controllable nature of this tradeoff through both rank selection and layer-wise deployment confirms the practical utility of our theoretical framework.

\subsubsection{Physics-Informed Neural Networks%: Sketch-Based Monitoring
}

Physics-informed neural networks require computing spatial-temporal derivatives for PDE residual evaluation. These derivatives necessitate exact gradient computation during the loss calculation phase. For such applications, we employ a monitoring-only configuration: maintaining standard backpropagation for parameter updates while accumulating sketches through forward hooks for diagnostic purposes.

Figure~\ref{fig:pinns_results} demonstrates that this monitoring framework preserves solution 
quality. All variants: standard backpropagation, fixed-rank sketching ($r=2$), and adaptive 
sketching, achieve identical $L^2$ relative error of 0.31 for the 2D Poisson equation. The sketch accumulation introduces only 0.57 MB memory overhead, confirming that comprehensive gradient monitoring can be achieved without compromising physics constraint satisfaction or solution accuracy.
Solution quality analysis (Figure~\ref{fig:pinn_solutions}) reveals that all methods reproduce the analytical solution with equivalent fidelity. 

\begin{figure}[ht]
    \centering
    \includegraphics[width=0.45\textwidth]{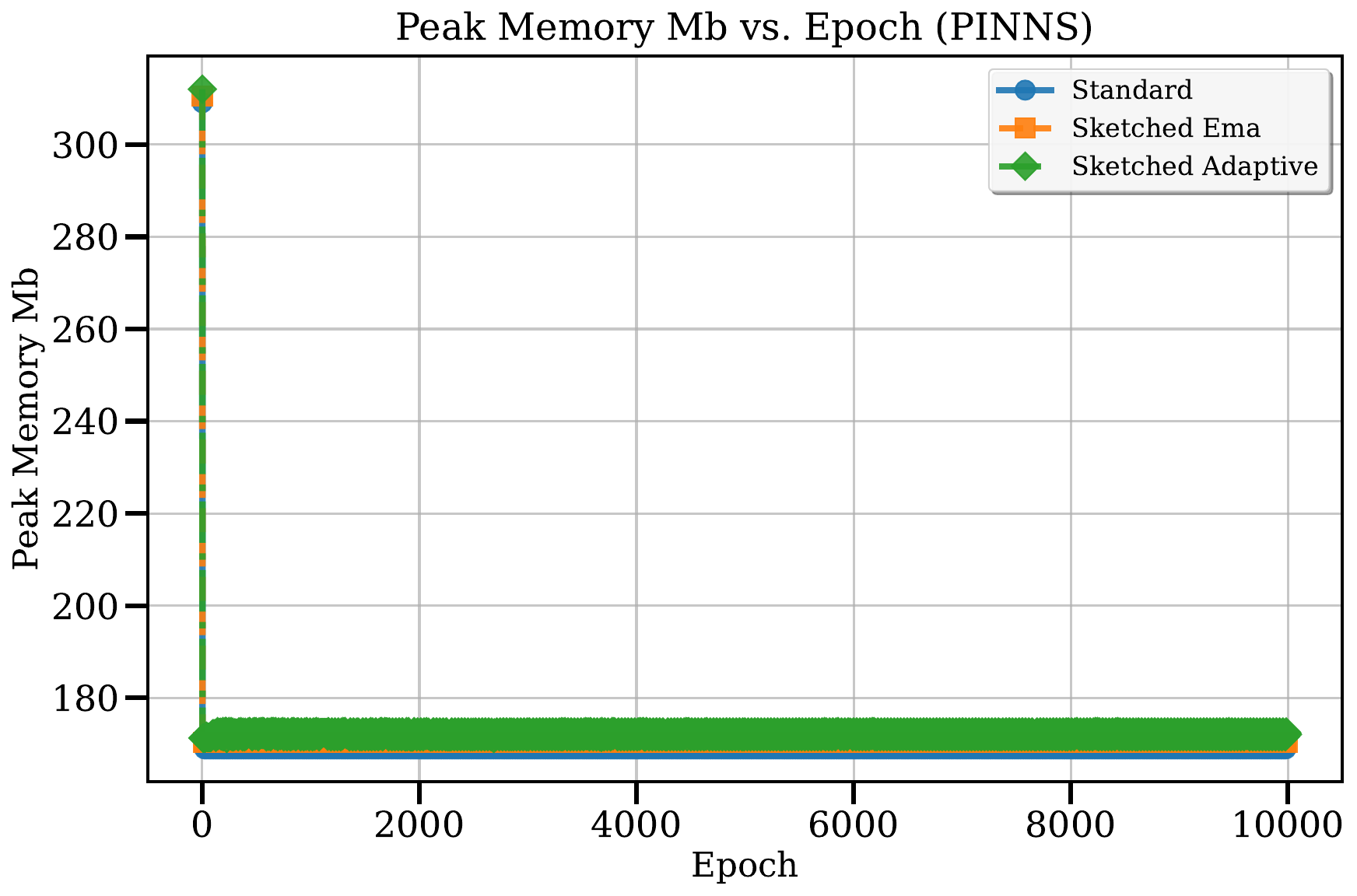}
    \includegraphics[width=0.45\textwidth]{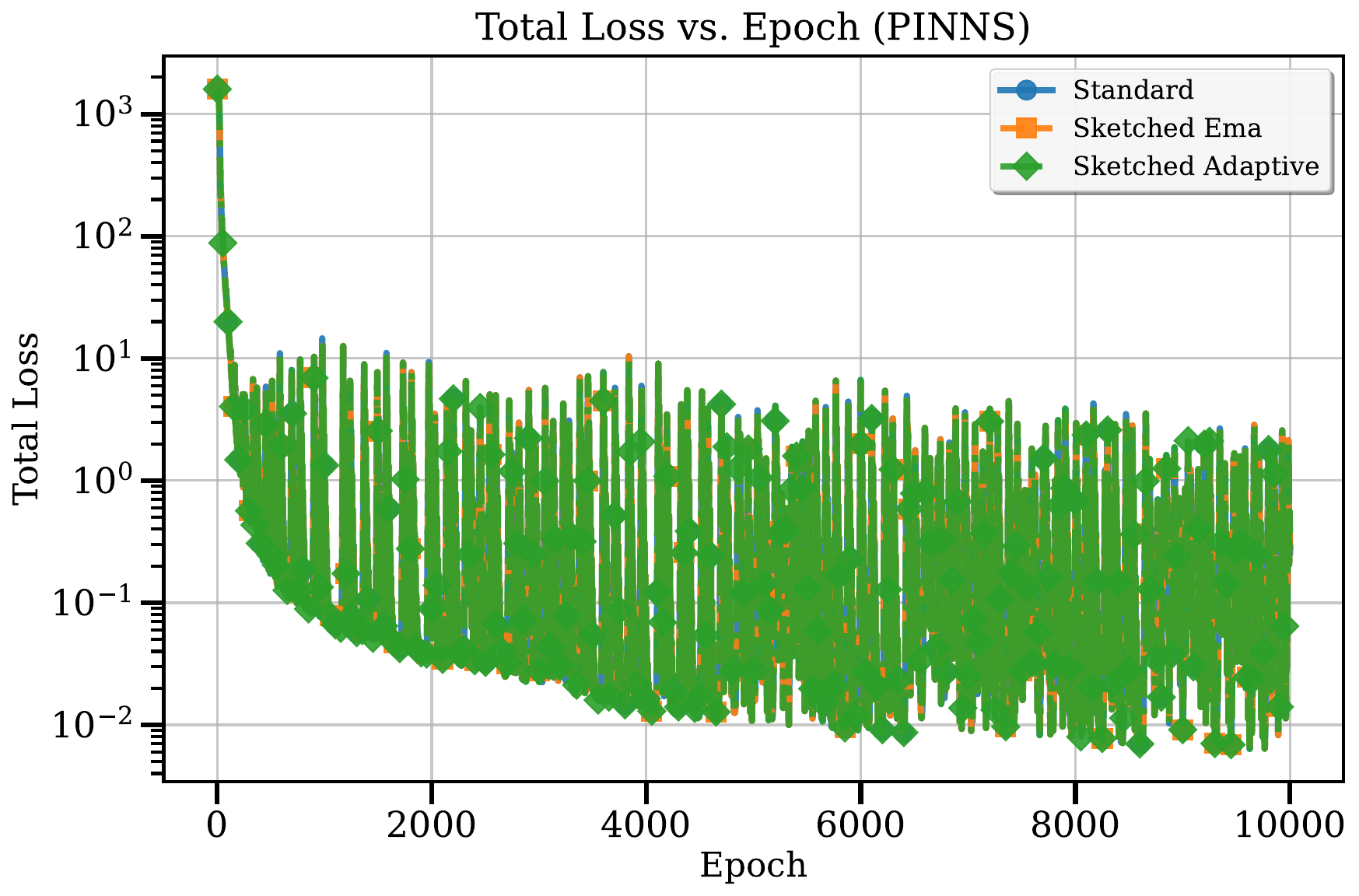}
    \caption{PINN training results with sketch-based monitoring: (Left) Peak memory usage showing minimal overhead (0.57 MB) from sketch storage for monitoring. (Right) Total loss convergence demonstrating identical training dynamics across standard backpropagation and sketch-based monitoring variants, validating that comprehensive gradient monitoring can be achieved without compromising physics constraint satisfaction. All methods achieve $L^2$ relative error of 0.31 on testing points.}
    \label{fig:pinns_results}
\end{figure}

\begin{figure}[ht]
    \centering
    % Left column (Exact, vertically centered)
    \begin{minipage}[t]{0.32\textwidth}
        \vspace*{5.1cm}
        \centering
        \includegraphics[width=\linewidth]{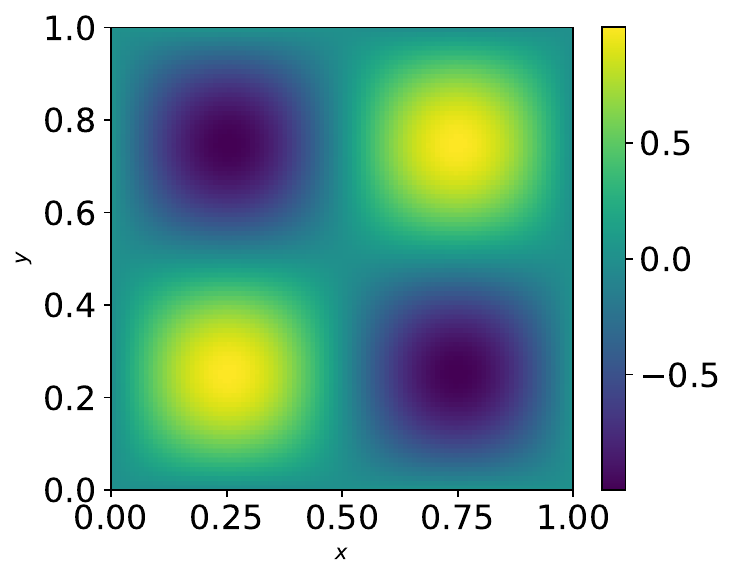}
        % \vspace{0.5em}
        
        \textbf{Exact}
        \vfill
    \end{minipage}%
    \hfill
    % Right column (Predicted + Error)
    \begin{minipage}[t]{0.64\textwidth}
        \centering
        \textbf{Standard}\\
        \includegraphics[width=\linewidth]{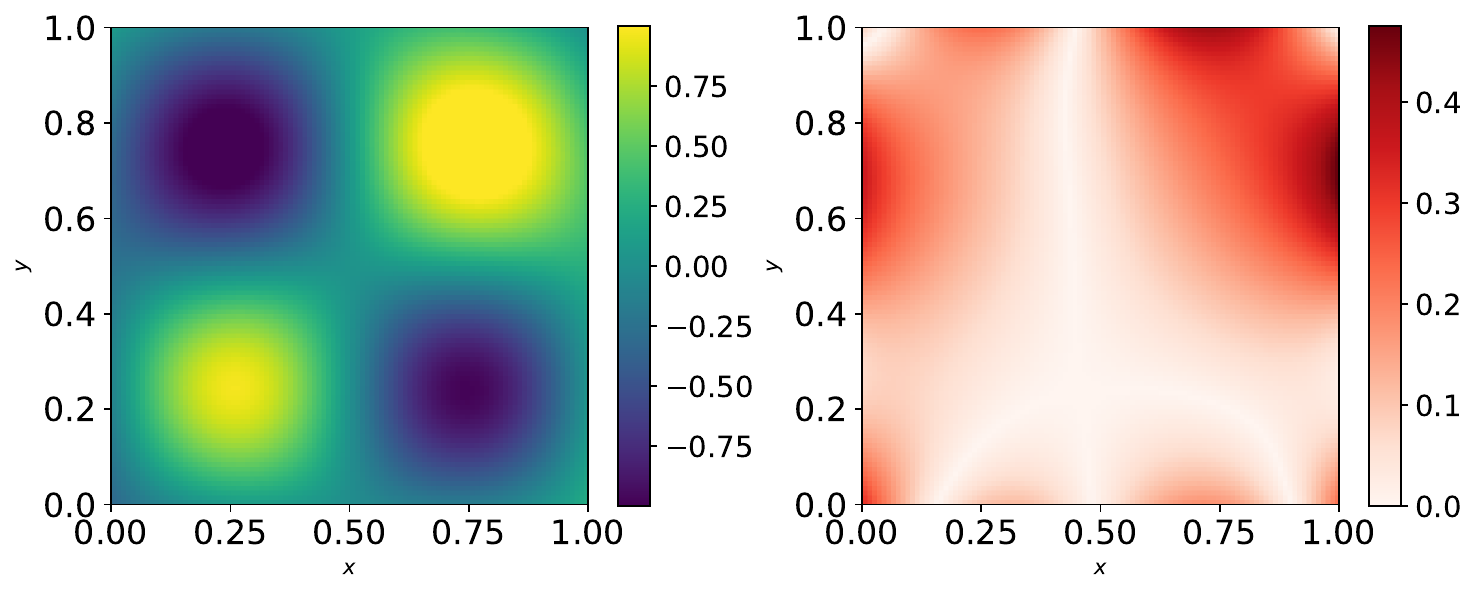}\\[0.7em]
        
        \textbf{Sketched – Fixed Rank}\\
        \includegraphics[width=\linewidth]{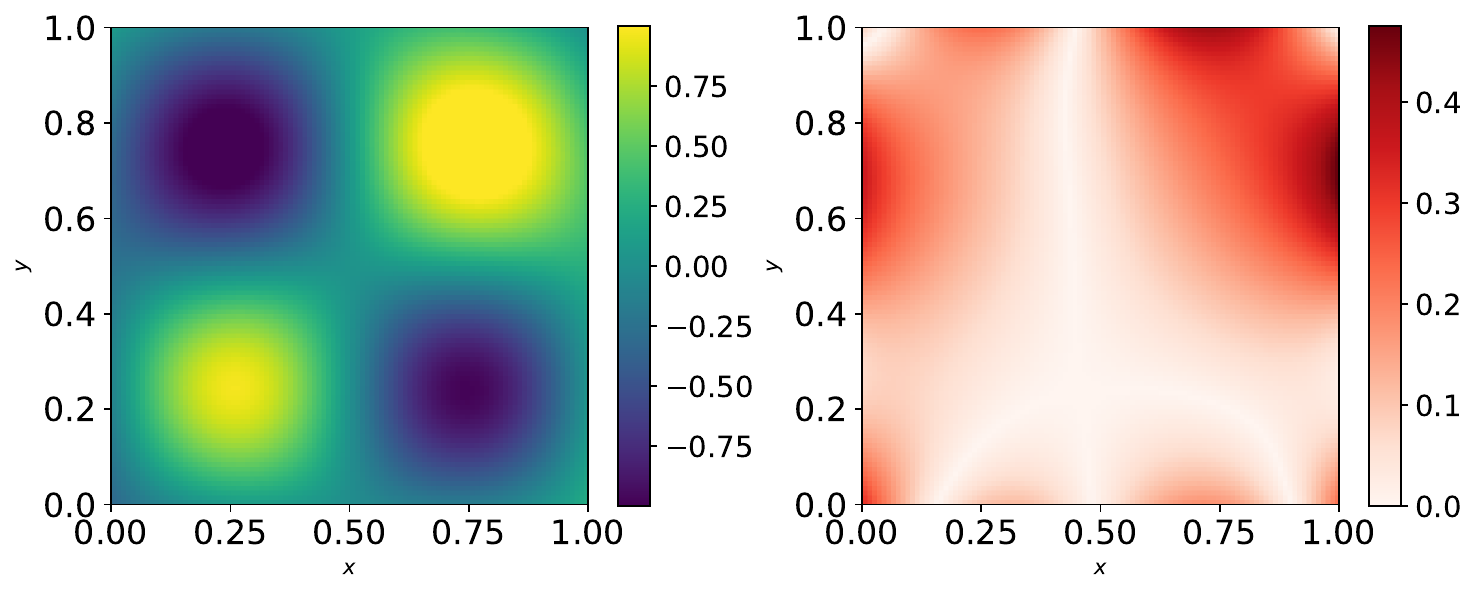}\\[0.7em]
        
        \textbf{Sketched – Adaptive Rank}\\
        \includegraphics[width=\linewidth]{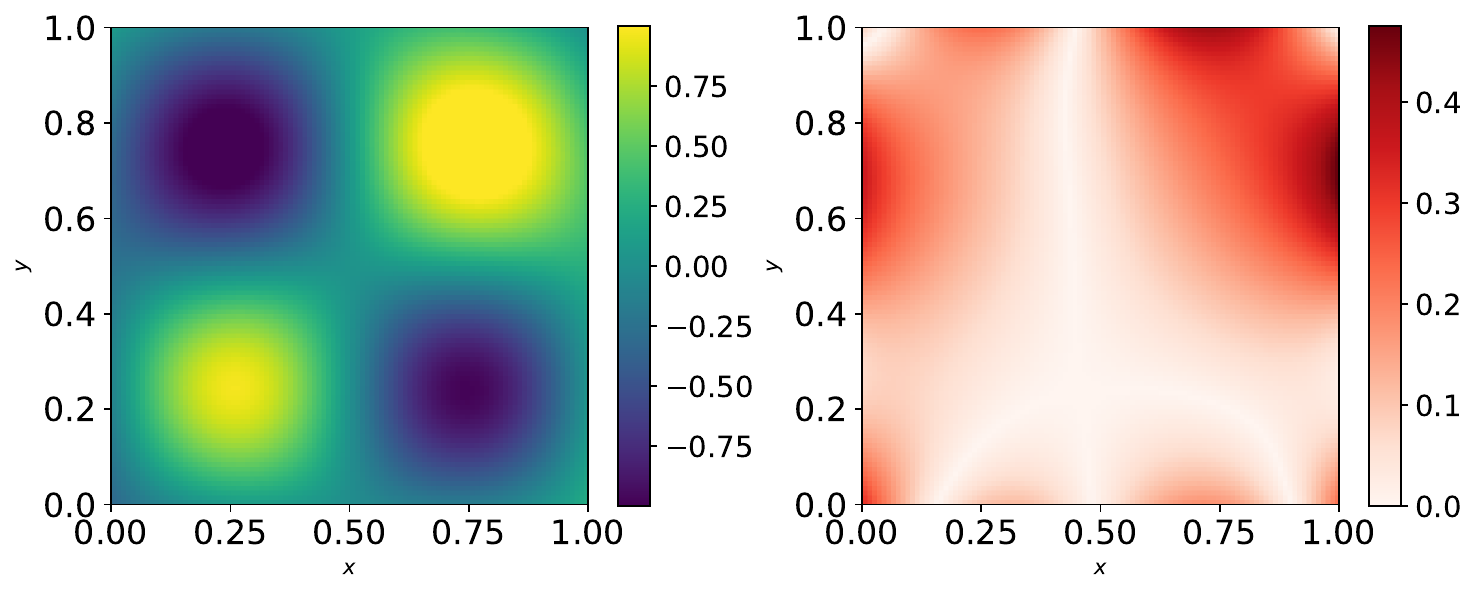}
        % \vspace{0.5em}

        % Bottom labels under right column
        % \begin{tabular}{cc}
        %     \makebox[0.5\linewidth][c]{\textbf{Predicted}} &
        %     \makebox[0.5\linewidth][c]{\textbf{Absolute Error}} \\
        % \end{tabular}
    \end{minipage}
        \begin{tabular}{ccc}
            \phantom{\makebox[0.32\linewidth][c]{\textbf{Exact}} }
            &
            \makebox[0.32\linewidth][c]{\textbf{Predicted}} &
            \makebox[0.32\linewidth][c]{\textbf{Absolute Error}} \\
        \end{tabular}

    \caption{PINN solution quality comparison: (Left) Exact solution, (Right, top to bottom) Standard backpropagation, Fixed-rank sketching, and Adaptive sketching. Each right column shows the predicted solution and the corresponding absolute error. All methods achieve $L^2$ relative error of 0.31 on testing points.}
    \label{fig:pinn_solutions}
\end{figure}

\subsection{Gradient Monitoring Application}
\label{subsec:gradient_monitoring}
Beyond direct training applications, our sketching framework enables comprehensive gradient 
monitoring with dramatic memory reductions. We demonstrate this capability through two 
contrasting sixteen-layer MLPs with 1024-dimensional hidden layers on MNIST classification: 
a ``healthy'' configuration using Kaiming initialization, ReLU activations, and Adam 
optimization, versus a ``problematic'' configuration employing Kaiming initialization with 
strong negative bias ($b=-3.0$), ReLU activations, and SGD optimization to induce training difficulties. Both networks use sketch rank $r=4$ (sketch dimensions 
$k=s=9$) with EMA momentum $\beta=0.9$. 

Figure~\ref{fig:gradient_monitoring} demonstrates comprehensive monitoring capabilities 
distinguishing healthy from problematic training dynamics. Training loss and accuracy curves 
reveal dramatic performance differences: the healthy network rapidly learns, achieving 
95\%+ accuracy within 10 epochs, while the problematic network exhibits complete training 
stagnation with accuracy fluctuating around random performance (10-11\%) throughout training.

Gradient norm analysis using $\|\mathbf{Z}_s^{[\ell]}\|_F$ captures distinct training 
dynamics. The healthy network shows gradient magnitudes ranging from $10^2$ to $10^4$ 
throughout training, indicating active parameter updates, while the problematic network 
exhibits constant gradient norms around $10^2$, confirming optimization stagnation. These 
sketch-based estimates provide effective proxies for full gradient magnitudes without 
materializing complete gradient matrices.

Gradient diversity measurement through stable rank estimation provides the most 
discriminative diagnostic signal. We compute 
$\text{rank}_{\text{stable}}(\mathbf{Y}_s^{[\ell]}) = \|\mathbf{Y}_s^{[\ell]}\|_F^2/\|\mathbf{Y}_s^{[\ell]}\|_2^2$ 
efficiently from Y-sketches. The healthy network achieves stable rank of $9.0$, indicating 
gradients span the full sketch subspace (matching $k=9$), while the problematic network 
shows only $2.9$, revealing severely collapsed gradient diversity. This collapse directly 
confirms the training pathology through reduced gradient dimensionality, demonstrating how 
sketch-based metrics detect subtle training failures invisible to loss curves alone.

The critical advantage lies in memory efficiency. Traditional gradient monitoring requires 
storing complete gradient matrices $\nabla_{\mathbf{W}^{[\ell]}} \mathcal{L} \in 
\mathbb{R}^{d_{\ell} \times d_{\ell-1}}$ at multiple temporal checkpoints throughout 
training. For our architecture with $L=16$ layers and $d_{\text{hidden}}=1024$, each 
checkpoint requires 64 MB of gradient storage. Monitoring over a temporal window of $T=5$ 
epochs (one checkpoint per epoch) demands $320$ MB total storage with complexity 
$\mathcal{O}(L \cdot d_{\text{hidden}}^2 \cdot T)$.

In contrast, our sketched approach maintains a single set of EMA-updated sketches requiring 
only $\mathcal{O}(L \cdot d_{\text{hidden}} \cdot r)$ storage independent of monitoring 
duration. The sketch storage totals 1.7 MB regardless of temporal window length, achieving 
99\% memory reduction compared to traditional monitoring over $T=5$ epochs 
(320 MB $\rightarrow$ 1.7 MB). This reduction grows with monitoring duration.

\begin{figure}[ht]
    \centering
    \includegraphics[width=\textwidth]{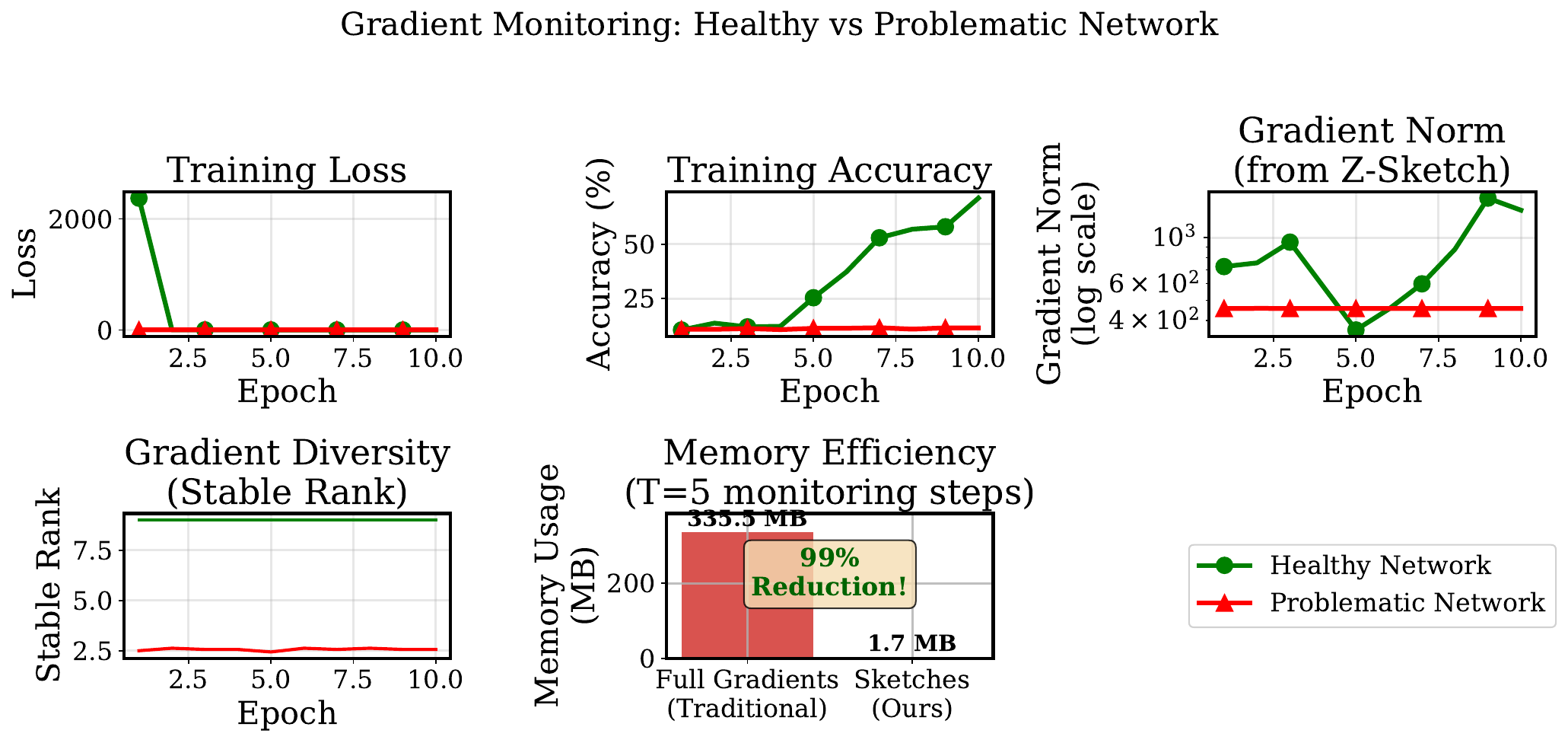}
    \caption{Gradient monitoring demonstration comparing healthy and problematic network 
configurations on MNIST. Both sixteen-layer networks (1024 neurons in each hidden layer) use 
sketch rank $r=4$. }
    \label{fig:gradient_monitoring}
\end{figure}
% This monitoring application represents the most compelling practical benefit of our sketching 
% framework. While direct training acceleration shows modest gains due to PyTorch's memory 
% optimizations, gradient monitoring over extended periods benefits dramatically from eliminating 
% the temporal storage factor $T$. The ability to track training health continuously without 
% prohibitive memory costs—maintaining full diagnostic capability across hundreds of epochs 
% within constant memory—opens new possibilities for understanding and debugging neural network 
% optimization in production environments where long-term gradient analysis was previously 
% infeasible due to storage constraints.

% \subsection{Adaptive Rank Behavior}
% \label{subsec:adaptive_behavior}

% The adaptive rank adjustment mechanism from Algorithm~\ref{alg:adaptive_sketched_backprop} operates consistently across experimental domains, automatically balancing memory efficiency against approximation quality. The system typically maintains minimal rank ($r = 2$) during stable training phases, increases rank when progress stagnates, and applies resets upon reaching threshold limits. This autonomous behavior validates the theoretical framework while demonstrating practical applicability across diverse optimization landscapes.

\subsection{Memory Analysis and Practical Implications}
\label{subsec:memory_analysis}
\phantom{new line}

\noindent \textit{Theory versus Practice.}
Our evaluation reveals significant discrepancy between theoretical and practical memory savings during direct training. While theoretical analysis predicts reduction from $\mathcal{O}(L \cdot d_{\text{hidden}}^2 \cdot T)$ to $\mathcal{O}(L \cdot d_{\text{hidden}} \cdot r + N_b \cdot r)$, empirical results show modest gains offset by implementation overhead. PyTorch's sophisticated memory management already optimizes gradient storage through dynamic allocation and reuse, while our sketching framework introduces overhead from EMA matrices and projection buffers.

However, gradient monitoring applications achieve dramatic benefits precisely where genuine memory bottlenecks exist. Our sixteen-layer MNIST experiment (Section~\ref{subsec:gradient_monitoring}) demonstrates the distinction: sketch-based monitoring requires only $1.7$ MB regardless of monitoring duration, compared to $335$ MB for traditional monitoring over $T=5$ epochs (99\% reduction). The elimination of temporal factor $T$ enables continuous gradient analysis over extended training periods—maintaining full diagnostic capability across hundreds of epochs within constant memory. For PINNs requiring exact gradients for physics constraints, monitoring-only configuration adds minimal $0.57$ MB overhead while preserving solution quality.

\noindent \textit{Optimal Application Domains.}
Our findings identify where sketching provides maximum benefit:

\textbf{Extended Gradient Monitoring}: Applications requiring gradient analysis over temporal windows benefit from eliminating the $T$ factor.

\textbf{Physics-Constrained Training}: Scenarios demanding exact gradients for constraints (PINNs, optimal control) while needing memory-efficient diagnostics.

\textbf{Large-Scale Configurations}: Wider networks ($d_{\text{hidden}} \geq 2048$) and deeper architectures ($L \geq 50$) where gradient storage dominates memory consumption.

The combination of preserved diagnostic capability with dramatic memory reduction establishes sketched backpropagation as a valuable tool for neural network analysis in memory-constrained production environments.

\section{Conclusion}
\label{sec:conclusion}

We have presented the first adaptation of control-theoretic matrix sketching to neural network gradient computation. Our EMA-based sketching framework with adaptive rank adjustment enables memory-efficient gradient analysis while preserving training effectiveness.

Experimental validation across MNIST, CIFAR-10, and PINNs reveals application-specific effectiveness. Gradient monitoring applications achieve 99+\% memory reduction over extended temporal windows, enabling continuous training diagnostics previously infeasible due to storage constraints. For applications requiring exact gradients (PINNs), monitoring-only deployment adds minimal overhead while maintaining solution quality. This opens new possibilities for understanding and debugging neural network optimization in production environments.

Future work will explore transformer architecture adaptation, develop theoretical guarantees for EMA approximation quality under stochastic mini-batch dynamics, investigate selective per-layer sketching strategies, and examine integration with complementary memory optimization techniques for large-scale neural network analysis.

\bibliographystyle{plain}
\bibliography{references}

\end{document}